\newcommand{\ouralg}{$\mathsf{SimpleUCB}$\xspace}
\newcommand{\ouralgtwo}{$\mathsf{MaxMinUCB}$\xspace}
\newcommand{\ouralgthree}{$\mathsf{MinWD}$\xspace}
\theoremstyle{plain}
\newtheorem{theorem}{Theorem}
\newtheorem{corollary}{Corollary}[theorem]
\newtheorem{lemma}[theorem]{Lemma}
\newtheorem{proposition}[theorem]{Proposition}
\theoremstyle{definition}
\newtheorem{definition}{Definition}
\newtheorem{remark}{Remark}
\title{Robust Bandit Learning with Imperfect Context}
\author{
	Jianyi Yang,\; Shaolei Ren\\
}
\begin{document}

\maketitle

\begin{abstract}
	A standard assumption in contextual multi-arm bandit is that the true context
	is perfectly known 
	before arm selection.
	Nonetheless, in many practical applications
	(e.g., cloud resource management),
	prior to arm selection, the context information can only
	be acquired by prediction 
	subject to
	errors or adversarial modification.
	In this paper, we study a contextual bandit setting
	in which only imperfect context is available for arm selection
	while the true context is revealed at the end of each round.
	We propose two robust
	arm selection
	algorithms: \ouralgtwo (Maximize Minimum UCB) which  maximizes the worst-case reward,
	and \ouralgthree (Minimize Worst-case Degradation) which  minimizes the worst-case regret.
	Importantly,
	we analyze the robustness of \ouralgtwo and \ouralgthree
	by deriving  both regret and reward bounds
	compared to  an
	oracle that knows the true context.
	Our results show that as time goes on, \ouralgtwo
	and \ouralgthree both perform as  asymptotically well
	as their optimal counterparts that know the reward function.
	Finally, we apply \ouralgtwo and \ouralgthree to
	online edge datacenter selection, and run synthetic simulations to validate
	our theoretical analysis.
\end{abstract}

\section{Introduction}\label{sec:Introduction}

Contextual bandits \cite{Lu2010,chu2011LinUCB} concern online
learning scenarios such as recommendation systems \cite{Bandit_LinUCB_WWW_2010_Li:2010:CAP:1772690.1772758},  mobile health \cite{Lei2014}, cloud resource provisioning \cite{Xu2019}, wireless
communications \cite{Bandit_Contextual_LinkAdaptation_Wireless_Berkeley_2019_10.1145/3341216.3342212}, economics\cite{pourbabaee2020robust},  in which
arms (a.k.a., actions) are selected based on the underlying context to
balance the tradeoff between exploitation
of the already learnt knowledge and exploration
of uncertain arms \cite{AuerEXP3,AuerUCB,Bubeck2012,Dani2008}.

The majority of the existing studies on contextual bandits \cite{chu2011LinUCB,Valko13,Bandit_Contextual_LinkAdaptation_Wireless_Berkeley_2019_10.1145/3341216.3342212} assume that a perfectly accurate context is known before each arm selection. Consequently, as long as the agent learns increasingly more knowledge about reward, it can select arms with lower and lower average regrets.
In many cases, however, the perfect (or true) context is not available to the
agent prior to arm selection. Instead, the true context is  revealed  after taking an action at the end of each round \cite{Bandit_ContextualDistribution_ETHZ_NIPS_2019_Kirschner2019StochasticBW},
but can be predicted using predictors,
such as time series prediction\cite{time_series_prediction_Brockwell2016,LSTM_gers1999},  to facilitate the agent's arm selection.
For example,
in wireless communications,
the channel condition is subject
to various attenuation effects (e.g., path loss and small-scale
multi-path fading), and is critical context information for
the transmitter configuration such as modulation and rate adaption (i.e., arm selection)  \cite{Goldsmith_WirelessCommunications_2005,Bandit_Contextual_LinkAdaptation_Wireless_Berkeley_2019_10.1145/3341216.3342212}.
But, the channel
condition context is
predicted and hence can only be coarsely
known until the completion of transmission.
For another example,
the exact workload
arrival rate
is crucial context information for  cloud resource management, but
cannot
be known until the workload actually arrives.
Naturally,  context prediction
is subject to prediction errors.
Moreover,
it can also open a new attack surface ---
an outside attacker may adversarially modify the predicted context. For example, a recent study \cite{Baosen_Adversarial_LoadForecasting_SmartGrid_eEnergy_2019_10.1145/3307772.3328314}
shows that the energy load predictor in smart grid can be adversarially attacked to produce load estimates with higher-than-usual errors. More motivating examples are provided in \cite{yang2021robust}.
In general, imperfectly predicted and even adversarially presented
context is very common in practice.

As motivated by practical problems,
we consider a bandit setting
where the agent receives imperfectly predicted context
and selects an arm at the beginning of each round and the context is revealed after arm selection. We focus on robust arm optimization given imperfect context, which is as crucial as robust reward function estimation or exploration 
in contextual bandits \cite{doubly_robust_bandit_dudik2011,Adversarial_Linear_Bandit_Neu2020,robust_contextual_bandit_mhealth_zhu2018}.
Concretely, with imperfect context, our goal is to select arms online in a robust manner to optimize
the worst-case performance in a neighborhood domain with the received imperfect context as center and a defense budget as radius. In this way, the robust arm selection can defend against the imperfect context error ( from either context prediction error or adversarial modification) constrained by the budget.

Importantly and interestingly, given imperfect context,
maximizing the worst-case reward (referred to as type-I robustness objective) and minimizing
the worst-case regret (referred to as type-II robustness objective) can lead to different arms,
while they are the same under the setting of perfect context \cite{Bandit_Contextual_LinkAdaptation_Wireless_Berkeley_2019_10.1145/3341216.3342212,Bandit_LinUCB_WWW_2010_Li:2010:CAP:1772690.1772758,Bandit_Introduction_Booklet_2019_MAL-068}. Given imperfect context, the strategy for type-I robustness is more conservative than that for type-II robustness in terms of reward. The choice of the robustness objective depends on applications. For example, some safety-aware applications \cite{Safety_aware_bandits_sun2017, safe_RL_survey_garcia2015} intend to avoid extremely low reward, and thus type-I objective is suitable for them. Other applications \cite{Bandit_LinUCB_WWW_2010_Li:2010:CAP:1772690.1772758,edge_service_placement_bandit_chen2018,robust_bandit_attack_guan2020} focus on preventing large sub-optimality of selected arms, and type-II objective is more appropriate. As a distinction from other works on robust optimization of bandits \cite{Bandit_RobustOptimization_GaussianProcess_NIPS_2018_10.5555/3327345.3327478,Bandit_RobustDistributionalContextual_AISTATS_2020_kirschner2020distributionally,DRBQO_nguyen2020distributionally}, we highlight the difference of the two types of robustness objectives.

We derive
two algorithms --- \ouralgtwo (Maximize Minimum UCB), which maximizes the worst-case reward for type-I objective,
and \ouralgthree (Minimize Worst-case Degradation), which minimizes the worst-case regret for type-II objective.
The challenge of algorithm designs is that the agent has no access to exact knowledge of reward function but the estimated counterpart based on history collected data. Thus, in our design, \ouralgtwo maximizes the lower bound of reward,  while \ouralgthree minimizes the upper bound of regret.

We analyze the robustness of \ouralgtwo and \ouralgthree
by deriving  both regret and reward bounds,
compared to a strong
oracle that knows
the true context for arm selection as well as the
exact reward function.
Importantly,
our results show that,
while a linear regret term exists
for both \ouralgtwo and \ouralgthree
due to imperfect context,
the added linear regret term is
actually the same as the amount of regret
incurred by respectively optimizing type-I and type-II objectives
with perfect knowledge of the reward function. This
implies that as time goes on, \ouralgtwo and \ouralgthree will asymptotically approach the corresponding optimized objectives from the reward and regret views, respectively.

Finally, we apply \ouralgtwo and \ouralgthree to the
problem of online edge datacenter selection
and run synthetic simulations to validate
our theoretical analysis.

\section{Related Work}\label{sec:RelatedWork}

\textbf{Contextual bandits.}
Linear contextual bandit learning is considered in LinUCB by \cite{Bandit_LinUCB_WWW_2010_Li:2010:CAP:1772690.1772758}. .
The study \cite{yadkori2011} improves the regret analysis of linear contextual bandit learning,
while the studies \cite{agrawal12,agrawal13} solve this problem by Thompson sampling and give a regret bound. 
There are also studies to extend the algorithms to general reward
functions like non-linear functions, for which kernel method is exploited in GP-UCB \cite{Srinivas10}, Kernel-UCB \cite{Valko13}, IGP-UCB and GP-TS \cite{Chowdhury17,Deshmukh17}. 
Nonetheless, a standard assumption in these studies is
that perfect context is available for arm selection,
whereas imperfect context is common in many practical applications \cite{Bandit_RobustDistributionalContextual_AISTATS_2020_kirschner2020distributionally}.

\textbf{Adversarial bandits and Robustness.}
The prior studies on adversarial bandits \cite{Bandit_Adversarial_Auer_COLT_2016_pmlr-v49-auer16,Bandit_AdversarialAttacks_LihongLi_Google_NIPS_2018_10.5555/3327144.3327281,Bandit_ContaminatedBandit_JMLR_2019_JMLR:v20:18-395,Bandit_DataPoison_NessShroff_ICML_2019_10.5555/3327144.3327281}
have primarily focused on that the adversary maliciously presents
rewards to the agent or directly injects errors in
rewards.
Moreover, many studies \cite{Minimax_adversarial_bandit_Audibert2009_COLT,Adversarial_bandits_lowerbounds_gerchinovitz2016_NIPS}  consider the best constant policy throughout the entire learning process as
the oracle, while in our setting the best arm 
depends on the true context at each round.  The adversarial setting has also been extended to contextual bandits \cite{Adversarial_Linear_Bandit_Neu2020,Adversarial_contextual_bandit_syrgkanis2016_ICML,Sequential_Batch_bandit_han2020sequential}.

Recently, robust bandit algorithms have been proposed for various adversarial settings. Some focus on robust reward estimation and exploration \cite{Bandit_ContaminatedBandit_JMLR_2019_JMLR:v20:18-395, robust_bandit_attack_guan2020,doubly_robust_bandit_dudik2011}, and others train a robust or distributionally robust policy \cite{conservative_bandits_wu2016, Adversarial_contextual_bandit_syrgkanis2016_ICML,Distributionally_robust_policy_si2020,Distributionally_robust_batch_bandit_si2020}.
Our study
differs from the existing adversarial bandits by seeking
two different robust algorithms given imperfect (and possibly adversarial) context. 

\textbf{Optimization and bandits with imperfect context.} 
\cite{online_learning_rakhlin2013} considers online optimization with predictable sequences and \cite{online_optimization_dynamix_jadbabaie2015} focuses on adaptive online optimization competing with dynamic benchmarks. Besides, \cite{Robust_optimization_minimax_regret_chen2014robust,Two-stage_minimax_regret_jiang2013two} study the robust optimization of mini-max regret. These studies assume perfectly known cost functions
without learning. 
A recent study \cite{Bandit_RobustOptimization_GaussianProcess_NIPS_2018_10.5555/3327345.3327478}
considers Bayesian optimization and aims at identifying a worst-case good input region
with input perturbation (which can
also model a perturbed but fixed environment/context parameter).
The study \cite{Wang16} considers the linear bandit where certain context features are hidden, and uses iterative methods to estimate
hidden contexts and model parameters.
Another recent study \cite{Bandit_ContextualDistribution_ETHZ_NIPS_2019_Kirschner2019StochasticBW}
assumes the knowledge
of context distribution for arm selection,
and considers a weak oracle that also only knows context distribution.
The relevant papers \cite{Bandit_RobustDistributionalContextual_AISTATS_2020_kirschner2020distributionally} and \cite{DRBQO_nguyen2020distributionally}  consider robust Bayesian optimizations where context distribution information
is imperfectly provided, and propose to maximize the worst-case expected reward for distributional robustness.
Although the objective of \ouralgtwo in our paper is similar to the robust optimization objectives in the two papers, we additionally derive a lower bound for the true reward in our analysis,
which provides another perspective on the robustness of arm selection. More importantly, considering that the objectives in the two relevant papers are equivalent to minimizing a pseudo robust regret, we propose \ouralgthree  and derive an upper bound for the incurred true regret.

\section{Problem Formulation}

Assume that at the beginning of round $t$, the agent receives
imperfect context $\hat{x}_t\in\mathcal{X}$ which is exogenously provided and not necessarily the true context $x_t$.
Given the imperfect context $\hat{\mathbf{x}}_t\in\mathcal{X}$
and an arm set $\mathcal{A}$,
the agent selects an arm $a_t\in\mathcal{A}$ for round $t$.
Then, the reward $y_t$ along with the true context $x_t$ is revealed to the agent at
the end of round $t$. Assume that $\mathcal{X}\times \mathcal{A}\subseteq \mathbb{R}^d$, and we use $x_{a_t,t}$ to denote the $d$-dimensional concatenated vector $[x_t, a_t]$.

The reward $y_t$ received by the agent in round $t$ is jointly decided by
the true context $x_t$ and selected arm $a_t$, and
can be expressed as follows
\begin{equation}\label{observereward_batch}
y_{t}=f(x_{t},a_t)+n_{t},
\end{equation}
where $f:\mathcal{X}\times \mathcal{A}\rightarrow \mathbb{R}$ is the reward function, $\mathcal{X}$ is the context domain,
and $n_t$ is the noise term. We assume
that the reward function $f$ belongs to a reproducing kernel Hilbert space (RKHS) $\mathcal{H}$ generated by a kernel function $k: \left( \mathcal{X}\times \mathcal{A}\right)\times\left( \mathcal{X}\times \mathcal{A}\right)\ \rightarrow \mathbb{R}$. In this RKHS,
there exists a mapping function $\phi: \left( \mathcal{X}\times \mathcal{A}\right)\rightarrow \mathcal{H}$ which maps context and arm to their corresponding feature in $\mathcal{H}$. By reproducing property, we have $k\left([x,a],[x',a'] \right) =\left\langle \phi\left( x,a\right),\phi\left( x',a'\right) \right\rangle $ and $f\left(x,a \right) =\left\langle \phi\left( x,a\right),\theta\right\rangle $ where
$\theta$ is the representation of function $f(\cdot,\cdot)$ in $\mathcal{H}$. Further, as commonly considered in the bandit literature \cite{Bandit_Introduction_Booklet_2019_MAL-068,Bandit_LinUCB_WWW_2010_Li:2010:CAP:1772690.1772758},
the noise
$n_t$ follows $b$-sub-Gaussian distribution for a constant $b\geq 0$, i.e. conditioned on the filtration $\mathcal{F}_{t-1}=\left\{ x_{\tau}, y_{a,\tau},a_{\tau}, \tau=1,\cdots,t-1\right\}$, $\forall \sigma\in \mathbb{R}$,
$
\mathbb{E}\left[e^{\sigma n_t}|\mathcal{F}_{t-1} \right] \leq \exp\left(\frac{\sigma^2b^2}{2} \right).
$

Without knowledge of reward function $f$, bandit algorithms are designed to decide an arm sequence $\left\lbrace a_t, t=1,\cdots,T\right\rbrace $ to minimize the cumulative regret
\begin{equation}\label{eqn:trueregret}
R_T=\sum_{t=1}^Tf(x_{t},A^*(x_{t}))-f(x_{t},a_t),
\end{equation}
where $A^{*}\left(x_t \right) =\arg\max_{a\in\mathcal{A}}f(x_t,a)$ is the oracle-optimal arm  at round $t$
given the true context $x_t$. When the received contexts are perfect, i.e. $\hat{x}_t=x_t$, minimizing the cumulative regret is equivalent to maximizing the cumulative reward
$
F_T=\sum_{t=1}^Tf(x_{t},a_t).
$

\subsection{Context Imperfectness}
The context
error can come from a variety
of sources, including imperfect context prediction algorithms
and adversarial
corruption \cite{Bandit_RobustDistributionalContextual_AISTATS_2020_kirschner2020distributionally,Baosen_Adversarial_LoadForecasting_SmartGrid_eEnergy_2019_10.1145/3307772.3328314} on context.
We simply use context error to
encapsulate all the error sources without further differentiation.
We assume that context error $\left\|x_t-\hat{x}_t \right\|$, where $\|\cdot\|$ is a certain norm  \cite{Bandit_RobustOptimization_GaussianProcess_NIPS_2018_10.5555/3327345.3327478}, is less than $\Delta$.  Also, $\Delta$ is referred to as the defense \emph{budget} and can be considered as the level
of robustness/safeguard that the agent intends to provide
against context errors: with a larger $\Delta$,
the agent wants to make its arm selection robust against
larger context errors (at the possible expense of its reward).
A time-varying error budget can be captured by using $\Delta_t$.  Denote the neighborhood domain of context $x$ as
$
\mathcal{B}_{\Delta}\left(x \right)  =\left\lbrace y\in\mathcal{X} \mid \left\|y-x \right\|\leq \Delta\right\rbrace
$.
Then, we have the true context $x_t\in \mathcal{B}_{\Delta}\left(\hat{x}_t \right)$,
where $\hat{x}_t$ is available to the agent.

\subsection{Reward Estimation}
Reward estimation is critical for arm selection. Kernel ridge regression, which is widely used in contextual bandits \cite{Bandit_Introduction_Booklet_2019_MAL-068} serves as the reward estimation method in our algorithm designs.
By kernel ridge regression, the estimated reward given arm $a$ and context $x$ is expressed as
\begin{equation}\label{equ:estimean}
\hat{f}_{t}(x,a)=\mathbf{k}_{t}^T(x,a)(\mathbf{K}_{t}+\lambda \mathbf{I})^{-1}\mathbf{y}_{t}
\end{equation}
where $\mathbf{I}$ is an identity matrix, $\mathbf{y}_{t}\in \mathbb{R}^{t-1}$ contains the history of $y_{\tau}$, $\mathbf{k}_{t}(x,a)\in\mathbb{R}^{t-1}$
contains  $k([x,a],[x_{\tau},a_{\tau}])$,
and $\mathbf{K}_{t}\in \mathbb{R}^{(t-1)\times (t-1)}$ contains
$ k([x_{\tau},a_\tau],[x_{\tau'},a_{\tau'}])$, for $\tau, \!\tau'\!\!\in \!\{1,\cdots,t-1\}$.

The confidence width of kernel ridge regression is given in the following concentration lemma followed by a definition of reward UCB.
\begin{lemma}[Concentration of Kernel Ridge Regression]\label{lma:KernelConcent}
	Assume that the reward function $f(x,a)$ satisfies $\left| f(x,a)\right|\leq B$, the noise $n_t$ satisfies a sub-Gaussian distribution with parameter $b$,
	and  kernel ridge regression is used to estimate the reward function. With a probability of at least $1-\delta, \delta\in (0,1)$, for all $a\in \mathcal{A}$ and $t\in \mathbb{N}$, the estimation error satisfies
	$
	|\hat{f}_{t}(x,a)-f(x,a)|\leq h_ts_{t}(x,a),
	$
	where $h_t=\sqrt{\lambda}B+b\sqrt{\gamma_{t}-2\log\left( \delta\right) }$,   $\gamma_{t}=\log \det(\mathbf{I}+\mathbf{K}_{t}/\lambda)\leq \bar{d}\log (1+\frac{t}{\bar{d}\lambda} )$ and $\bar{d}$ is the rank of $\mathbf{K}_{t}$. Let $\mathbf{V}_t=\lambda \mathbf{I}+\sum_{s=1}^{t}\phi(x,a)\phi(x,a)^\top$, the squared confidence width is given by $ s^2_{t}(x,a)=\phi(x,a)^\top\mathbf{V}^{-1}_{t-1}\phi(x,a)=\frac{1}{\lambda}k([x,a],[x,a])-\frac{1}{\lambda}\mathbf{k}_{t}(x,a)^T(\mathbf{K}_{t}+\lambda \mathbf{I})^{-1}\mathbf{k}_{t}(x,a)$.
\end{lemma}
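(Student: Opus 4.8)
The plan is to follow the self-normalized concentration framework of Abbasi-Yadkori, P\'al and Szepesv\'ari and its kernelized extension (as in IGP-UCB), working in the feature space $\mathcal{H}$ and translating back through the kernel matrix. First I would rewrite the kernel ridge estimate in feature form as $\hat{f}_t(x,a)=\phi(x,a)^\top\hat{\theta}_{t-1}$, where $\hat{\theta}_{t-1}=\mathbf{V}_{t-1}^{-1}\sum_{s=1}^{t-1}\phi_s y_s$ and $\phi_s:=\phi(x_s,a_s)$, and verify via the Woodbury identity that this coincides with \eqref{equ:estimean}; the same identity converts $\phi(x,a)^\top\mathbf{V}_{t-1}^{-1}\phi(x,a)$ into the kernelized form of $s_t^2(x,a)$ stated in the lemma. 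Substituting $y_s=\langle\phi_s,\theta\rangle+n_s$ and using $\sum_s\phi_s\phi_s^\top=\mathbf{V}_{t-1}-\lambda\mathbf{I}$ yields the error decomposition
\begin{equation*}
\hat{f}_t(x,a)-f(x,a)=-\lambda\,\phi(x,a)^\top\mathbf{V}_{t-1}^{-1}\theta+\phi(x,a)^\top\mathbf{V}_{t-1}^{-1}\sum_{s=1}^{t-1}\phi_s n_s,
\end{equation*}
which separates a deterministic regularization bias from a stochastic noise term.

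Next I would bound both terms by Cauchy--Schwarz in the $\mathbf{V}_{t-1}^{-1}$-inner product, factoring out $\|\phi(x,a)\|_{\mathbf{V}_{t-1}^{-1}}=s_t(x,a)$. For the bias term, $\mathbf{V}_{t-1}\succeq\lambda\mathbf{I}$ gives $\|\theta\|_{\mathbf{V}_{t-1}^{-1}}\leq\|\theta\|_{\mathcal{H}}/\sqrt{\lambda}$, so reading $B$ as the RKHS-norm bound $\|\theta\|_{\mathcal{H}}\leq B$ produces the contribution $\sqrt{\lambda}\,B\,s_t(x,a)$. The noise term is the crux: since each $n_s$ is $b$-sub-Gaussian conditioned on $\mathcal{F}_{s-1}$ and $\phi_s$ is $\mathcal{F}_{s-1}$-measurable, the self-normalized bound for vector-valued martingales gives, uniformly in $t$ with probability at least $1-\delta$,
\begin{equation*}
\Big\|\sum_{s=1}^{t-1}\phi_s n_s\Big\|_{\mathbf{V}_{t-1}^{-1}}^2\leq 2b^2\log\!\Big(\det(\mathbf{V}_{t-1})^{1/2}\det(\lambda\mathbf{I})^{-1/2}/\delta\Big).
\end{equation*}

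The key kernelization step is Sylvester's determinant identity $\det(\mathbf{V}_{t-1})/\det(\lambda\mathbf{I})=\det(\mathbf{I}+\mathbf{K}_t/\lambda)$, which turns the feature-space Gram determinant into the $(t-1)\times(t-1)$ kernel-matrix determinant; the right-hand side above then equals $b^2(\gamma_t-2\log\delta)$, so the noise contribution is $b\sqrt{\gamma_t-2\log\delta}\,s_t(x,a)$. Adding the two pieces gives exactly $h_t s_t(x,a)$. The auxiliary estimate $\gamma_t\leq\bar{d}\log(1+t/(\bar{d}\lambda))$ is handled separately by writing $\gamma_t=\sum_i\log(1+\lambda_i/\lambda)$ over the at-most-$\bar{d}$ nonzero eigenvalues $\lambda_i$ of $\mathbf{K}_t$, applying Jensen's inequality to the concave $\log$, and using $\sum_i\lambda_i=\mathrm{tr}(\mathbf{K}_t)\leq t$ under the normalization $k([x,a],[x,a])\leq 1$.

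I expect the self-normalized martingale bound to be the main obstacle. Rather than re-deriving it, I would invoke the Abbasi-Yadkori--P\'al--Szepesv\'ari construction via the method of mixtures: integrate a family of exponential super-martingales indexed by directions $u\in\mathcal{H}$ against a Gaussian prior on $u$, then apply Ville's maximal inequality to the resulting mixture. Two points need care. First, uniformity over all $t$ must come from this stopping-time / maximal-inequality argument rather than a union bound, so that no spurious $\log t$ factor enters $h_t$; this is precisely what the method of mixtures buys. Second, uniformity over $(x,a)$ is automatic: once the high-probability event controlling $\|\sum_{s=1}^{t-1}\phi_s n_s\|_{\mathbf{V}_{t-1}^{-1}}$ is fixed, the pair $(x,a)$ enters only through the deterministic factor $s_t(x,a)$, so no covering of $\mathcal{X}\times\mathcal{A}$ is needed. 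The remaining subtlety is carrying the argument through the possibly infinite-dimensional $\mathcal{H}$, which the determinant identity resolves by re-expressing every relevant quantity through finite-dimensional kernel matrices.
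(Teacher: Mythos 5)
Your proposal is correct and follows essentially the same route as the paper's proof: both decompose the estimation error into a regularization-bias term and a noise term, bound each by Cauchy--Schwarz in the $\mathbf{V}_{t-1}^{-1}$-norm (reading $B$ as a bound on $\|\theta\|_{\mathcal{H}}$), invoke the Abbasi-Yadkori--P\'al--Szepesv\'ari self-normalized martingale theorem for the noise, and kernelize via the Woodbury/Sylvester identities to obtain $h_t s_t(x,a)$ with $\gamma_t=\log\det(\mathbf{I}+\mathbf{K}_t/\lambda)$. The only cosmetic differences are that you work feature-space-first rather than kernel-expression-first, and you prove the bound $\gamma_t\leq\bar{d}\log(1+t/(\bar{d}\lambda))$ directly by Jensen where the paper cites Lemma 10 of Abbasi-Yadkori et al.
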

\begin{definition}\label{def:defucb}
	Given arm $a\in\mathcal{A}$ and context $x\in\mathcal{X}$,
	the reward UCB (Upper Confidence Bound)  is defined as
	$
	U_t\left(x,a \right) =\hat{f}_{t}\left( x,a\right) +h_t s_{t}(x,a).
	$
\end{definition}

The next lemma shows that the term $s_t\left(x_{t},a_t \right)$
has a vanishing impact on regret over time.
\begin{lemma}\label{lma:sumvarbatch}
	The sum of confidence widths given 
	$x_t$ for $t\in\{1,\cdots, T\}$ satisfies $\sum_{t=1}^Ts^2_{t}(x_t,a_t)\!\leq 2\gamma_{T}$, where $\gamma_{T}=\log \det(\mathbf{I}+\mathbf{K}_{T}/\lambda)\leq \bar{d}\log (1+\frac{T}{\bar{d}\lambda} )$
	and $\bar{d}$ is the rank of $\mathbf{K}_{T}$. 
\end{lemma}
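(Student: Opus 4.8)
The plan is to use the standard \emph{elliptical potential} argument that links the accumulated squared confidence widths to the log-determinant of the regularized design matrix, which is exactly the information gain $\gamma_T$. Writing $\phi_t:=\phi(x_t,a_t)$ for the selected feature at round $t$, so that $\mathbf{V}_t=\lambda\mathbf{I}+\sum_{s=1}^{t}\phi_s\phi_s^\top$ and $s_t^2(x_t,a_t)=\phi_t^\top\mathbf{V}_{t-1}^{-1}\phi_t$, the first step is the rank-one update identity for determinants: since $\mathbf{V}_t=\mathbf{V}_{t-1}+\phi_t\phi_t^\top$, the matrix determinant lemma gives $\det(\mathbf{V}_t)=\det(\mathbf{V}_{t-1})\bigl(1+\phi_t^\top\mathbf{V}_{t-1}^{-1}\phi_t\bigr)=\det(\mathbf{V}_{t-1})\bigl(1+s_t^2(x_t,a_t)\bigr)$.

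The second step is to telescope this identity from $t=1$ to $T$, using $\mathbf{V}_0=\lambda\mathbf{I}$, to obtain $\det(\mathbf{V}_T)=\det(\lambda\mathbf{I})\prod_{t=1}^{T}\bigl(1+s_t^2(x_t,a_t)\bigr)$, and then take logarithms: $\sum_{t=1}^{T}\log\bigl(1+s_t^2(x_t,a_t)\bigr)=\log\det\bigl(\mathbf{I}+\tfrac{1}{\lambda}\sum_{t=1}^{T}\phi_t\phi_t^\top\bigr)$. The right-hand side is a determinant of an operator in feature space, which I would convert to the finite Gram-matrix determinant via the Weinstein--Aronszajn (``push-through'') identity $\det(\mathbf{I}+\tfrac{1}{\lambda}\Phi^\top\Phi)=\det(\mathbf{I}+\tfrac{1}{\lambda}\mathbf{K}_T)$, where $\Phi$ stacks the selected feature vectors and $\mathbf{K}_T$ is the Gram matrix; this identifies the sum with $\gamma_T=\log\det(\mathbf{I}+\mathbf{K}_T/\lambda)$.

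The final step is the elementary inequality $z\le 2\log(1+z)$, valid for $z\in[0,1]$, applied with $z=s_t^2(x_t,a_t)$. This yields $\sum_{t=1}^{T}s_t^2(x_t,a_t)\le 2\sum_{t=1}^{T}\log\bigl(1+s_t^2(x_t,a_t)\bigr)=2\gamma_T$, which is the claim; the stated upper bound $\gamma_T\le\bar{d}\log(1+\tfrac{T}{\bar{d}\lambda})$ then follows from the concavity of $\log$ (Jensen / AM--GM) applied to the at most $\bar{d}$ nonzero eigenvalues of $\mathbf{K}_T$, whose sum is controlled by the trace, i.e.\ by the (normalized) kernel values along the trajectory.

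The main obstacle I anticipate is not the telescoping but the two normalization and regularity conditions needed to make the inequalities valid. First, the elementary bound $z\le 2\log(1+z)$ only holds for $z\le 1$, so I must verify $s_t^2(x_t,a_t)\le 1$; this follows from $s_t^2(x_t,a_t)\le\tfrac{1}{\lambda}k([x_t,a_t],[x_t,a_t])$ together with a normalization such as $k([x,a],[x,a])\le\lambda$ (equivalently a bounded kernel with $\lambda\ge 1$), which I would state explicitly. Second, the determinant identity must be justified in the possibly infinite-dimensional RKHS; since only $T$ features ever enter $\mathbf{V}_T$, the relevant determinant is effectively finite-dimensional and the Weinstein--Aronszajn identity applies rigorously on the span of $\{\phi_t\}$, so this is the step that deserves the most care.
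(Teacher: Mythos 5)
Your proof is correct and takes essentially the same route as the paper's: the paper simply invokes Lemma 11 of \cite{yadkori2011} (stating the condition $\lambda \geq 1$), and that lemma is precisely the elliptical-potential argument you spell out --- the rank-one determinant update $\det(\mathbf{V}_t)=\det(\mathbf{V}_{t-1})(1+s_t^2)$, telescoping to $\log\det(\mathbf{V}_T/\lambda)=\gamma_T$, and the inequality $z\leq 2\log(1+z)$ for $z\in[0,1]$. Your explicit handling of the normalization requirement $s_t^2\leq 1$ and of the infinite-dimensional determinant via the Weinstein--Aronszajn identity are exactly the caveats the paper absorbs into that citation.
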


Then, we give the definition of \emph{UCB-optimal} arm which is important in our algorithm designs.
\begin{definition}\label{def:ucboptarm}
	Given context ${x}\in\mathcal{X}$,
	the \emph{UCB-optimal} arm  is defined as
	$
	A^{\dagger}_t\left(x \right)=\arg\max_{a\in{\mathcal{A}}}U_t\left(x,a \right).
	$
\end{definition}
Note that if the received contexts are perfect, i.e. $\hat{x}_t=x_t$, the standard contextual UCB strategy selects arm at round $t$ as $A^{\dagger}_t\left(x_t \right)$.
Under the cases with imperfect context, a naive policy (which we call \ouralg) is simply oblivious of context errors, i.e. the agent selects the UCB-optimal arm regarding imperfect context $\hat{x}_t$, denoted as $a^{\dagger}_t=A^{\dagger}_t\left(\hat{x}_t\right)$,
by simply viewing the imperfect context $\hat{x}_t$
as true context. Nonetheless, if we want to guarantee the arm selection performance even in the worst case, robust arm selection that accounts for context errors is needed. 

\begin{figure*}[!t]	
	\centering
	\subfigure[Type-I Robustness]{
		\label{fig:robust1illustration} \includegraphics[width={0.47\textwidth}]{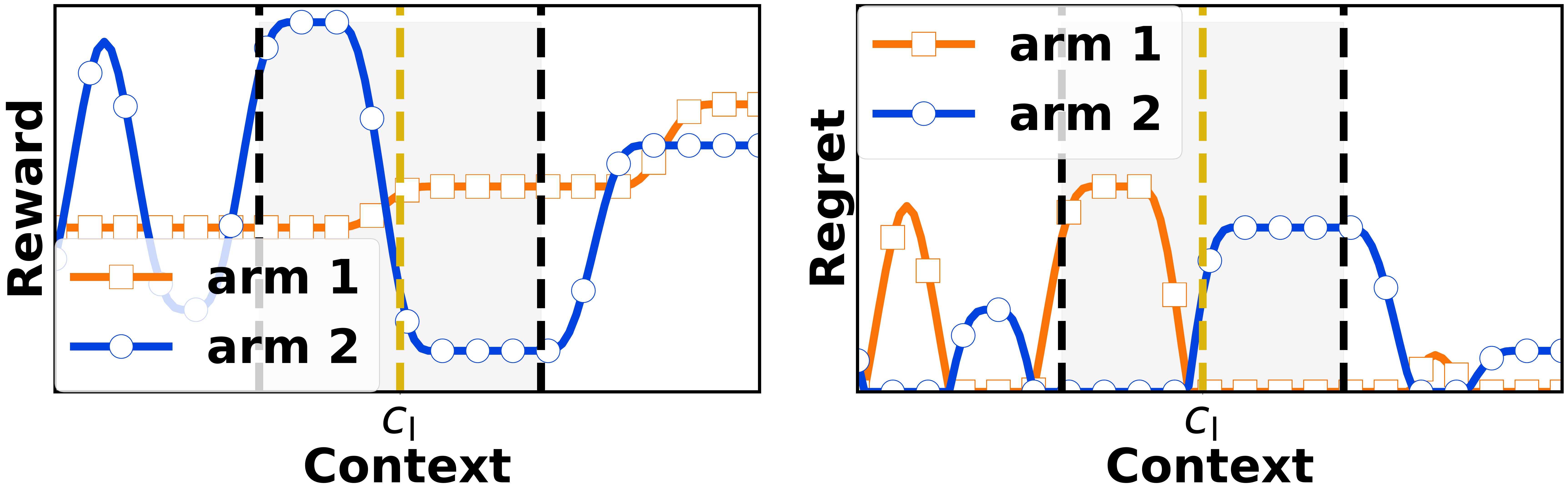}
	}\hspace{0.03\textwidth}
	\subfigure[Type-II Robustness]{
		\label{fig:robust2illustration}
		\includegraphics[width=0.47\textwidth]{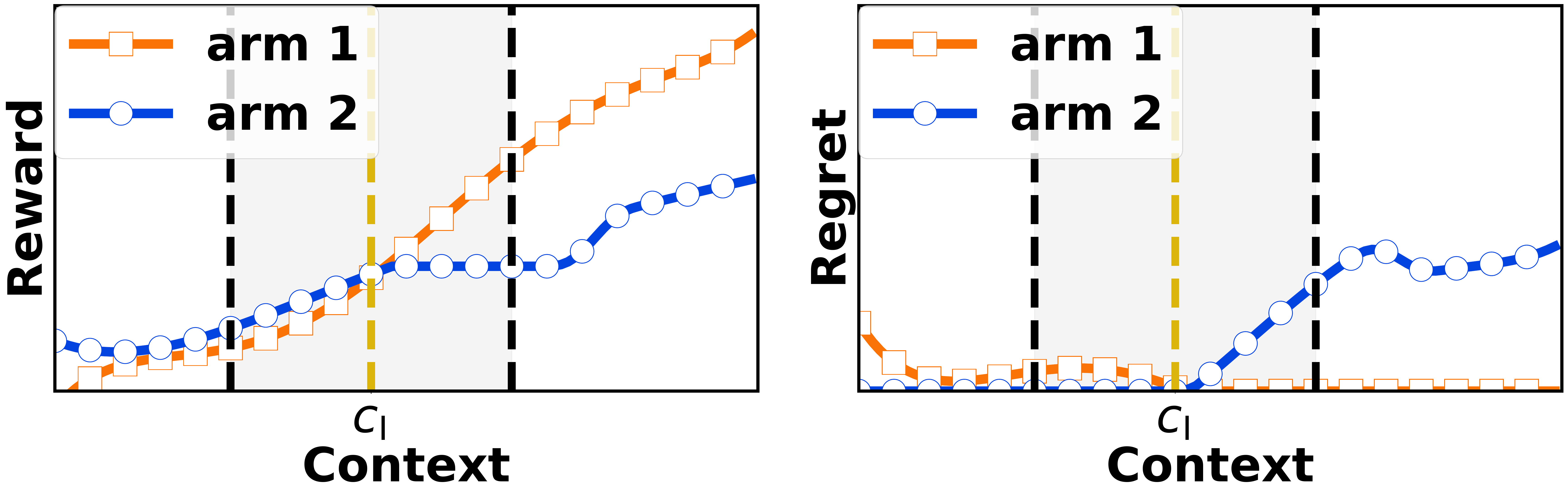}
	}%
	\centering
	\caption{Illustration of reward and regret functions that Type-I and Type-II robustness objectives are suitable for, respectively.
		The golden dotted vertical line represents the imperfect context $c_\mathrm{I}$, and the gray region represents the defense region $\mathcal{B}_{\Delta}(c_\mathrm{I})$.}
	\label{fig:regretrewardrobustness}
\end{figure*}

\section{Robustness  Objectives}
In the existing bandit literature such as \cite{Bandit_Adversarial_Auer_COLT_2016_pmlr-v49-auer16,Sequential_Batch_bandit_han2020sequential,Bandit_LinUCB_WWW_2010_Li:2010:CAP:1772690.1772758},
maximizing the cumulative reward is equivalent to minimizing
the cumulative regret, under the assumption
of perfect context for arm selection.
In this section, we will show that maximizing the worst-case reward is equivalent to minimizing a \emph{pseudo} regret and is different from minimizing the worst-case true regret. 

\subsection{Type-I Robustness}
With imperfect context, one approach to robust arm selection is to maximize the worst-case reward.
With perfect knowledge of reward function, the oracle arm that maximizes the worst-case reward at round $t$ is
\begin{equation}\label{eqn:maxminreward}
\bar{a}_t=\arg\max_{a\in \mathcal{A}}\min_{{x}\in\mathcal{B}_{\Delta}(\hat{x}_t) }f(x,a).
\end{equation}
For analysis in the following sections, given
$\bar{a}_t$, the corresponding context for the worst-case reward  is denoted as
\begin{equation} \bar{{x}}_t=\arg\min_{x \in \mathcal{B}_{\Delta}(\hat{x}_t)}f\left(x,\bar{a}_t \right),
\end{equation}
and the resulting optimal
worst-case reward is denoted as
\begin{equation}\label{eqn:worstreward}
MF_t=f\left(\bar{x}_t,\bar{a}_t \right).
\end{equation}

Next, Type-I robustness objective is defined based on the difference
$\sum_{t=1}^TMF_t-F_T$, where
$F_T=\sum_{t=1}^Tf(x_{t},a_t)$ is the actual cumulative reward.

\begin{definition}\label{def:type-I}
	If, with an arm selection strategy $\left\lbrace a_1, \cdots, a_T\right\rbrace $, the difference between the optimal cumulative worst-case reward and the cumulative true reward $\sum_{t=1}^TMF_t-F_T$ is sub-linear with respect to $T$, then the strategy achieves Type-I robustness.
\end{definition}
If an arm selection strategy achieves Type-I robustness, the lower bound for the true reward $f\left(x_t,a_t \right)$ approaches the optimal worst-case reward $MF_t$ in the defense region as $t$ increases.
Therefore, a strategy achieving type-I robustness objective  can prevent very low reward.  For example, in Fig.~\ref{fig:robust1illustration}, arm 1 is the one that maximizes the worst-case reward, which is not necessarily optimal but always avoids extremely low reward under any context in the defense region.

Note that maximizing the worst-case reward is equivalent to minimizing the robust regret defined in \cite{Bandit_RobustDistributionalContextual_AISTATS_2020_kirschner2020distributionally}, which is written using our formulation as \begin{equation}\label{eqn:pseudoregret}
\bar{R}_T=\sum_{t=1}^T\min_{x\in\mathcal{B}_{\Delta}(\hat{x}_t)}f\left(x,\bar{a}_t \right) \!-\!\min_{x\in\mathcal{B}_{\Delta}(\hat{x}_t)}f\left(x,a_t \right).
\end{equation}
However, this robust regret is a \emph{pseudo} regret because the rewards of oracle arm $\bar{a}_t$ and selected arm $a_t$ are compared under different contexts
(i.e., their
respective worst-case contexts), and it is not an upper or lower bound of the true regret $R_T$. To obtain a robust regret performance, we need to define another robustness objective based on the true regret.

\subsection{Type-II Robustness}
To provide robustness for the regret with imperfect context, we can minimize the cumulative worst-case regret, which is expressed as
\begin{equation}\label{eqn:worstregret}
\tilde{R}_T=\sum_{t=1}^T\max_{x\in\mathcal{B}_{\Delta}(\hat{x}_t)}\left[ f\left(x,A^*(x)\right) \!-\!f\left(x,a_t \right)\right].
\end{equation}
Clearly, the true regret $R_T\leq \tilde{R}_T$,
and minimizing the worst-case regret is equivalent to minimizing
an upper bound for the true regret. Define the instantaneous regret function with respect to context $x$ and arm $a$ as $
r\left(x,a \right) =f(x,A^{*}\left(x \right) )-f(x,a)
$. Since given the reward function the optimization is decoupled among different rounds, the robust oracle arm to minimize the worst-case regret at round $t$ is
\begin{equation}\label{eqn:minmaxregret}
\tilde{a}_t=\arg\min_{a \in \mathcal{A}}\max_{x \in \mathcal{B}_t(\hat{x}_t)}r(x,a).
\end{equation}
For analysis in the following sections, given $ \tilde{a}_t$, the corresponding context for the worst-case regret is denoted as
\begin{equation} \label{eqn:solutionmmreg}
\tilde{x}_t=\arg\max_{x \in \mathcal{B}_{\Delta}(\hat{x}_t)}r(x,\tilde{a}_t),
\end{equation}
and the resulting optimal worst-case regret is
\begin{equation}\label{eqn:optworstreg}
MR_t=r(\tilde{x}_t,\tilde{a}_t).
\end{equation}
Now, we can give the definition of Type-II robustness as follows.
\begin{definition}\label{def:type-II}
	If, with an arm selection strategy $\left\lbrace a_1, \cdots, a_T\right\rbrace $, the difference between the cumulative true regret and the optimal cumulative worst-case regret $R_T-\sum_{t=1}^{T}MR_t$ is sub-linear with respect to $T$, then the strategy achieves Type-II robustness.
\end{definition}

If an arm selection strategy achieves Type-II robustness, as time increases, the upper bound for the true regret $r(x_t,a_t)$ also approaches the optimal worst-case regret $MR_t$.
Hence, a strategy achieving type-II robustness objective can prevent a high regret.
As shown in Fig. \ref{fig:robust2illustration}, arm 1 is selected by minimizing the worst-case regret, which is a robust arm selection because the regret of arm 1 under any context in the defense region is not too high.

\subsection{Comparison of  Two Robustness Objectives}
The two types of robustness correspond to the algorithms maximizing the worst-case reward and minimizing the worst-case regret, respectively. In many cases, they result in different arm selections. Take the two scenarios in Fig.~\ref{fig:regretrewardrobustness} as examples. In the scenario of Fig.~\ref{fig:robust1illustration}, given the defense region, arm 1 is selected by maximizing the worst-case reward and arm 2 is selected by minimizing the worst-case regret.  It can be observed that the worst-case regrets of the two arms are very close, but the worst-case reward of arm 2 is much lower than that of arm 1. Thus, the strategy of maximizing the worst-case reward is more suitable for this scenario. Differently, in the scenario of Fig.~\ref{fig:robust2illustration}, arm 2 is selected by maximizing the worst-case reward and arm 1 is selected by minimizing the worst-case regret. Since the worst-case rewards of the two arms are very close and the worst-case regret of arm 2 is much larger than arm 1, it is more suitable to minimize the worst-case regret.

\section{Robust Bandit Arm Selection}\label{sec:robustUCB}
In this section,
we propose two robust arm selection algorithms: (1)
\ouralgtwo (Maximize Minimum Upper Confidence Bound), which aims to maximize the minimum reward (Type-I robustness objective); and (2) \ouralgthree (Minimize Worst-case  Degradation), which aims to minimize the maximum regret (Type-II robustness objective). We derive the regret and reward bounds for both algorithms and the proofs are available in \cite{yang2021robust}.

\subsection{\ouralgtwo: Maximize Minimum UCB}\label{sec:maxminUCB}

\subsubsection{Algorithm}

\begin{algorithm}[!t]
	\caption{Robust Arm Selection with Imperfect Context}\label{alg:robust_arm}
	\begin{algorithmic}
		\REQUIRE  Context error budget $\Delta$
		\FOR {$t=1,\cdots,T$}
		\STATE   Receive imperfect context $\hat{x}_t$.
		\STATE   Select arm $a^{\textrm{I}}_t$ to solve Eqn.~\eqref{equ:minucb}
		in \ouralgtwo; or select arm $a^{\textrm{II}}_t$ to solve Eqn.~\eqref{equ:maxminsubopt} in \ouralgthree
		\STATE  Observe the true context $x_t$ and the reward $y_t$.
		\ENDFOR
	\end{algorithmic}
\end{algorithm}

To achieve type-I robustness,
\ouralgtwo in Algorithm~\ref{alg:robust_arm} selects an arm $a^{\textrm{I}}_t$ by maximizing the minimum UCB within the defense region  $\mathcal{B}_{\Delta}(\hat{x}_t)$:
\begin{equation}\label{equ:minucb}
a^{\textrm{I}}_t =\arg\max_{a\in\mathcal{A}}\min_{{x} \in \mathcal{B}_{\Delta}(\hat{x}_t)}U_t\left(x,a \right).
\end{equation}
The corresponding context that attains the minimum UCB in Eqn.\eqref{equ:minucb} is $x^{\textrm{I}}_t=\min_{x \in\mathcal{B}_{\Delta}(\hat{x}_t)}U_t\left(x,a^{\textrm{I}}_t \right)$.

\subsubsection{Analysis}\label{analysismaxminucb}

The next theorem gives a lower bound of the cumulative true reward of \ouralgtwo in
terms of the optimal worst-case reward and a sub-linear term.
\begin{theorem}\label{the:rewardmmucb}
	If \ouralgtwo is used to select arms with imperfect context, then for any true contexts $x_t\in \mathcal{B}_{\Delta}(\hat{x}_t)$ at round $t, t=1,\cdots, T$, with a probability of $1-\delta,\delta\in (0,1)$, we have the following lower bound on the worst-case cumulative reward
	\begin{equation}
	F_T\geq \sum_{t=1}^T MF_t-2h_T\sqrt{2T\bar{d}\log (1+\frac{T}{\bar{d}\lambda} )}
	\end{equation}
	where $MF_t$ is the optimal worst-case reward in Eqn.~\eqref{eqn:worstreward}, $\bar{d}$ is the rank of $\mathbf{K}_t$ and $h_T$ is given in Lemma \ref{lma:KernelConcent}.
\end{theorem}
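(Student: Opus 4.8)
The plan is to establish a per-round lower bound $f(x_t, a_t^{\mathrm{I}}) \geq MF_t - 2 h_t s_t(x_t, a_t^{\mathrm{I}})$ and then sum over $t$, using Lemma~\ref{lma:sumvarbatch} to show that the accumulated confidence-width penalty is sublinear. The per-round bound is obtained by chaining both directions of the concentration inequality in Lemma~\ref{lma:KernelConcent} with the min-max optimality of the arm selected by \ouralgtwo.

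First I would pass from the true reward to the UCB using the lower-confidence direction of Lemma~\ref{lma:KernelConcent}. Since $f(x_t, a_t^{\mathrm{I}}) \geq \hat{f}_t(x_t, a_t^{\mathrm{I}}) - h_t s_t(x_t, a_t^{\mathrm{I}})$ and, by Definition~\ref{def:defucb}, $U_t(x_t, a_t^{\mathrm{I}}) = \hat{f}_t(x_t, a_t^{\mathrm{I}}) + h_t s_t(x_t, a_t^{\mathrm{I}})$, eliminating $\hat{f}_t$ gives $f(x_t, a_t^{\mathrm{I}}) \geq U_t(x_t, a_t^{\mathrm{I}}) - 2 h_t s_t(x_t, a_t^{\mathrm{I}})$. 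This reduces the task to lower-bounding the realized UCB $U_t(x_t, a_t^{\mathrm{I}})$ by $MF_t$.

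For that lower bound I would use the selection rule \eqref{equ:minucb} together with the upper-confidence direction of Lemma~\ref{lma:KernelConcent}. Because $x_t \in \mathcal{B}_{\Delta}(\hat{x}_t)$, the realized UCB is at least the inner minimum, $U_t(x_t, a_t^{\mathrm{I}}) \geq \min_{x \in \mathcal{B}_{\Delta}(\hat{x}_t)} U_t(x, a_t^{\mathrm{I}})$; and since $a_t^{\mathrm{I}}$ is by definition the arm that maximizes this inner minimum, it dominates the value attained by the oracle arm $\bar{a}_t$, so $\min_x U_t(x, a_t^{\mathrm{I}}) \geq \min_x U_t(x, \bar{a}_t)$. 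Finally, $U_t(x, \bar{a}_t) \geq f(x, \bar{a}_t)$ pointwise by concentration, hence $\min_x U_t(x, \bar{a}_t) \geq \min_x f(x, \bar{a}_t) = f(\bar{x}_t, \bar{a}_t) = MF_t$. Chaining these inequalities yields the desired per-round bound.

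To finish I would sum over $t = 1, \dots, T$, giving $F_T \geq \sum_t MF_t - 2 \sum_t h_t s_t(x_t, a_t^{\mathrm{I}})$, and control the penalty. Using that $h_t$ is nondecreasing in $t$ (since $\gamma_t$ is), I bound it by $h_T \sum_t s_t(x_t, a_t^{\mathrm{I}})$, apply Cauchy--Schwarz to get $\sum_t s_t(x_t, a_t^{\mathrm{I}}) \leq \sqrt{T \sum_t s_t^2(x_t, a_t^{\mathrm{I}})}$, and invoke Lemma~\ref{lma:sumvarbatch} with $\gamma_T \leq \bar{d}\log(1 + \frac{T}{\bar{d}\lambda})$ to conclude $\sum_t s_t(x_t, a_t^{\mathrm{I}}) \leq \sqrt{2T\bar{d}\log(1 + \frac{T}{\bar{d}\lambda})}$, which gives the stated bound. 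The point requiring the most care is not any calculation but the recognition that comparing against $MF_t$ forces a two-sided, two-context use of the confidence bound --- the lower bound applied at the realized pair $(x_t, a_t^{\mathrm{I}})$ and the upper bound applied at the oracle arm $\bar{a}_t$ over the whole ball --- so I would ensure both are read off the same high-probability event, which Lemma~\ref{lma:KernelConcent} supplies uniformly.
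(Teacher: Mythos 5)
Your proposal is correct and follows essentially the same argument as the paper's proof: both chain $f(x_t,a^{\mathrm{I}}_t) \geq U_t(x_t,a^{\mathrm{I}}_t) - 2h_t s_t(x_t,a^{\mathrm{I}}_t)$, use the max-min selection rule to compare against $\bar{a}_t$, pass from the minimum UCB to $MF_t$ via the upper-confidence direction of Lemma~\ref{lma:KernelConcent}, and finish with Cauchy--Schwarz and Lemma~\ref{lma:sumvarbatch}. The only (cosmetic) difference is that the paper evaluates the concentration bound at the explicit minimizer $\bar{x}^{\textrm{I}}_t = \arg\min_{x\in\mathcal{B}_{\Delta}(\hat{x}_t)} U_t(x,\bar{a}_t)$, while you push the minimum through the pointwise inequality $U_t(\cdot,\bar{a}_t) \geq f(\cdot,\bar{a}_t)$ directly, which is the same step stated slightly more cleanly.
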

\begin{remark}
	Theorem \ref{the:rewardmmucb} shows that by \ouralgtwo, the difference between the optimal cumulative worst-case reward and the cumulative true reward is sub-linear and thus effectively achieves Type-I robustness according to Definition \ref{def:type-I}.
	This means that the reward by \ouralgtwo has a bounded sub-linear
	gap compared to the optimal worst-case reward
	$\sum_{t=1}^T MF_t$ obtained with perfect knowledge
	of the reward function.
	\hfill $\square$
\end{remark}

We are also interested in the cumulative true regret of \ouralgtwo which is given in the following corollary.
\begin{corollary}\label{the:regretmmucb}
	If \ouralgtwo is used to select arms with imperfect context, then for any true contexts $x_t\in  \mathcal{B}_{\Delta}(\hat{x}_t)$ at round $t, t=1,\cdots, T$, with a probability of $1-\delta,\delta\in (0,1)$, we have the following bound on the cumulative true regret defined in Eqn.~\eqref{eqn:trueregret}:
	\begin{equation}\label{equ:regretmmucbwc}
	\begin{split}
	R_T \!\leq\! \sum_{t=1}^T\overline{MR}_t +2h_T\sqrt{2T\bar{d}\log (1+\frac{T}{\bar{d}\lambda} )}
	\end{split}
	\end{equation}
	where $\overline{MR}_t=\max_{x \in \mathcal{B}_{\Delta}(\hat{x}_t)} f\left(x,A^*\left( x \right)  \right) -MF_t$, $MF_t$ is the optimal worst-case reward in Eqn.~\eqref{eqn:worstreward} .
\end{corollary}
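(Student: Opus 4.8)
The plan is to obtain this bound as a direct consequence of the reward lower bound already established in Theorem~\ref{the:rewardmmucb}, combined with the observation that the oracle's per-round reward $f(x_t,A^*(x_t))$ can be controlled by its worst-case value over the defense region. The starting point is to rewrite the cumulative true regret by separating the oracle reward from the realized reward: since $F_T=\sum_{t=1}^T f(x_t,a_t)$, the definition in Eqn.~\eqref{eqn:trueregret} gives
\[
R_T=\sum_{t=1}^T f(x_t,A^*(x_t))-F_T.
\]

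First I would bound the oracle term from above. Because the true context satisfies $x_t\in\mathcal{B}_\Delta(\hat{x}_t)$, each summand obeys $f(x_t,A^*(x_t))\le\max_{x\in\mathcal{B}_\Delta(\hat{x}_t)}f(x,A^*(x))$, so summing over rounds yields $\sum_{t=1}^T f(x_t,A^*(x_t))\le\sum_{t=1}^T\max_{x\in\mathcal{B}_\Delta(\hat{x}_t)}f(x,A^*(x))$. Next I would substitute the reward lower bound from Theorem~\ref{the:rewardmmucb}, namely $F_T\ge\sum_{t=1}^T MF_t-2h_T\sqrt{2T\bar d\log(1+\tfrac{T}{\bar d\lambda})}$, which holds with probability $1-\delta$. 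Combining the two inequalities yields
\[
R_T\le\sum_{t=1}^T\Big[\max_{x\in\mathcal{B}_\Delta(\hat{x}_t)}f(x,A^*(x))-MF_t\Big]+2h_T\sqrt{2T\bar d\log\big(1+\tfrac{T}{\bar d\lambda}\big)},
\]
and recognizing the bracketed quantity as $\overline{MR}_t$ from the statement completes the argument on the same $1-\delta$ event.

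Since the whole derivation is a short chain of inequalities, I do not expect any genuine obstacle; the only conceptual step is realizing that the additive sub-linear term need not be re-derived but is inherited verbatim from Theorem~\ref{the:rewardmmucb}, because the regret and reward of \ouralgtwo differ only by the algorithm-independent oracle term $\max_{x\in\mathcal{B}_\Delta(\hat{x}_t)}f(x,A^*(x))$. The main point to check carefully is bookkeeping: the high-probability event and the confidence constant $h_T$ used here must be exactly those of Theorem~\ref{the:rewardmmucb}, so that no additional union bound or extra failure probability is incurred and the corollary indeed holds with the same probability $1-\delta$.
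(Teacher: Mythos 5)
Your proof is correct and takes essentially the same route as the paper's: the paper likewise decomposes the regret into the oracle term minus the realized reward, bounds the oracle term by $\max_{x\in\mathcal{B}_{\Delta}(\hat{x}_t)} f\left(x,A^*(x)\right)$ using $x_t\in\mathcal{B}_{\Delta}(\hat{x}_t)$, and invokes Theorem~\ref{the:rewardmmucb} — the only cosmetic difference being that the paper works per round with the instantaneous bound $f(x_t,a^{\textrm{I}}_t)\geq MF_t-2h_ts_t(x_t,a^{\textrm{I}}_t)$ from that theorem's proof and then sums via Lemma~\ref{lma:sumvarbatch}, whereas you apply the theorem's cumulative statement as a black box. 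Your bookkeeping remark is also right: both arguments live on the single concentration event of Lemma~\ref{lma:KernelConcent}, so the corollary holds with the same probability $1-\delta$ and no additional union bound is needed.
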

\begin{remark}
	Corollary \ref{the:regretmmucb} shows that the worst-case regret by \ouralgtwo
	can be quite larger than the optimal worst-case regret $MR_t$ given in Eqn.~\eqref{eqn:optworstreg} (Type-II robustness objective). Actually, despite
	being robust in terms of rewards, arms selected
	by \ouralgtwo can still have very large regret as shown in Fig.~\ref{fig:robust2illustration}. Thus, to achieve type-II robustness, it is necessary to develop an arm selection algorithm that minimizes the worst-case regret.
\end{remark}

\subsection{\ouralgthree: Minimize Worst-case Degradation}\label{sec:minWD}

\subsubsection{Algorithm}
\ouralgthree is designed to asymptotically minimize the worst-case regret. Without the oracle knowledge of reward function, \ouralgthree performs arm selection based on the upper bound of regret.
Denote $D_{a}\left(x \right) = U_t\left(x, A^{\dagger}_t\left(x \right)   \right) - U_t\left(x,a \right)$ referred to as UCB \emph{degradation} at context $x$.
By Lemma \ref{lma:KernelConcent}, the instantaneous true regret can be bounded as
\begin{equation}\label{eqn:boundregretbysubopt}
\begin{split}
r\!\!\left(x_t,a_t \right)&\!\leq\!  \left[D_{a_t}\left(x_t \right) \!+\!2h_ts_t\left(x_t,a_t \right) \right]\\
&\!\leq\! \overline{D}_{a_t}\!+\!2h_ts_t\left(x_t,a_t \right),
\end{split}
\end{equation}
where $\overline{D}_{a_t}=\max_{x\in\mathcal{B}_{\Delta}(\hat{x}_t)}D_{a}\left(x\right)$ is called the \emph{worst case degradation}, and  $2h_ts_t\left(x_t,a_t \right) $ has a vanishing impact by Lemma \ref{lma:sumvarbatch}. Thus, 
to minimize worst-case
regret, 
\ouralgthree minimizes its upper bound $\overline{D}_{a_t}$ excluding
the vanishing term $2h_ts_t\left(x_t,a_t \right) $, i.e.
\begin{equation}\label{equ:maxminsubopt}
a^{\textrm{II}}_t=\min_{a\in\mathcal{A}}\max_{\mathbf{x} \in \mathcal{B}_{\Delta}(\hat{x}_t)}\left\lbrace  U_t\left(x, A^{\dagger}_t\left(x \right)   \right) - U_t\left(x,a \right) \right\rbrace.
\end{equation}
The context that attains the worst case in Eqn.~\eqref{equ:maxminsubopt} is written as $x^{\textrm{II}}_t=\arg\max_{x \in \mathcal{B}_{\Delta}(\hat{x}_t)}D_{a^{\textrm{II}}_t}\left(x\right)$.

\subsubsection{Analysis}
Given arm $a^{\textrm{II}}_t$ selected by \ouralgthree, the next lemma gives an upper bound of worst-case degradation.
\begin{lemma}\label{lma:boudnsubopt}
	If \ouralgthree is used to select arms with imperfect context, then for each $t=1,2,\cdots, T$, with a probability at least $1-\delta,\delta\in (0,1)$, we have
	\begin{equation}
	\overline{D}_{a^{\mathrm{II}}_t,t}\leq MR_t+2h_ts_t\left(\dot{x}_t,  A^{\dagger}_t\left(\dot{x}_t \right) \right),
	\end{equation}
	where $MR_t$ is the optimal worst-case regret defined in Eqn.~\eqref{eqn:optworstreg}, $\dot{x}_t=\arg\max_{x\in\mathcal{B}_{\Delta}(\hat{x}_t)}D_{\tilde{a}_t}\left(x \right)$ is the context that maximizes the degradation given the arm $\tilde{a}_t$ defined for the optimal worst-case regret in Eqn.~\eqref{eqn:solutionmmreg}.
\end{lemma}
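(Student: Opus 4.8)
The plan is to exploit the minimax optimality of the arm chosen by \ouralgthree to reduce the claim to a single deterministic inequality, and then apply the concentration result of Lemma~\ref{lma:KernelConcent} with carefully chosen directions. Since $a^{\mathrm{II}}_t$ is by definition the minimizer over $\mathcal{A}$ of the worst-case degradation, plugging in the particular (suboptimal) candidate $\tilde{a}_t$ can only increase the objective. Hence, purely from Eqn.~\eqref{equ:maxminsubopt},
\begin{equation*}
\overline{D}_{a^{\mathrm{II}}_t,t}=\min_{a\in\mathcal{A}}\max_{x\in\mathcal{B}_{\Delta}(\hat{x}_t)}D_{a}(x)\leq \max_{x\in\mathcal{B}_{\Delta}(\hat{x}_t)}D_{\tilde{a}_t}(x)=D_{\tilde{a}_t}(\dot{x}_t),
\end{equation*}
where the final equality is exactly the definition of $\dot{x}_t$. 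This step needs no probabilistic event; it leaves the task of bounding the single quantity $D_{\tilde{a}_t}(\dot{x}_t)=U_t(\dot{x}_t,A^{\dagger}_t(\dot{x}_t))-U_t(\dot{x}_t,\tilde{a}_t)$.

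Next I would invoke Lemma~\ref{lma:KernelConcent}, which on an event of probability at least $1-\delta$ gives the two-sided sandwich $f(x,a)\leq U_t(x,a)\leq f(x,a)+2h_ts_t(x,a)$ for all $x,a$. To upper bound $D_{\tilde{a}_t}(\dot{x}_t)$ I would apply the upper side to the leading UCB term and the lower side to the subtracted term. For the leading term I additionally use that $A^{\dagger}_t(\dot{x}_t)$ is only UCB-optimal, whereas $A^*(\dot{x}_t)$ is truly optimal, so $f(\dot{x}_t,A^{\dagger}_t(\dot{x}_t))\leq f(\dot{x}_t,A^*(\dot{x}_t))$. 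Combining these yields
\begin{equation*}
D_{\tilde{a}_t}(\dot{x}_t)\leq \big[f(\dot{x}_t,A^*(\dot{x}_t))-f(\dot{x}_t,\tilde{a}_t)\big]+2h_ts_t\big(\dot{x}_t,A^{\dagger}_t(\dot{x}_t)\big)=r(\dot{x}_t,\tilde{a}_t)+2h_ts_t\big(\dot{x}_t,A^{\dagger}_t(\dot{x}_t)\big),
\end{equation*}
where the bracketed difference is recognized as the true instantaneous regret $r(\dot{x}_t,\tilde{a}_t)$.

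Finally, since $\dot{x}_t\in\mathcal{B}_{\Delta}(\hat{x}_t)$ and $\tilde{x}_t$ maximizes $r(\cdot,\tilde{a}_t)$ over this ball (Eqn.~\eqref{eqn:solutionmmreg}), we have $r(\dot{x}_t,\tilde{a}_t)\leq r(\tilde{x}_t,\tilde{a}_t)=MR_t$, and substituting back through the chain of inequalities gives the claimed bound. I expect the main obstacle to be purely a matter of bookkeeping rather than depth: one must keep straight which of the two one-sided UCB bounds attaches the slack $2h_ts_t$ to which argument. The slack must be spent on the leading term, evaluated at the UCB-optimal arm $A^{\dagger}_t(\dot{x}_t)$ and context $\dot{x}_t$, which is precisely why it surfaces as $2h_ts_t(\dot{x}_t,A^{\dagger}_t(\dot{x}_t))$; applying the wrong direction to the subtracted $U_t(\dot{x}_t,\tilde{a}_t)$ term would either double-count the confidence width or fail to produce the clean regret difference $r(\dot{x}_t,\tilde{a}_t)$. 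The high-probability qualifier is inherited entirely from Lemma~\ref{lma:KernelConcent}, so no separate union bound is required.
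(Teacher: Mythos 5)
Your proposal is correct and follows essentially the same route as the paper's proof: both pass from $\overline{D}_{a^{\mathrm{II}}_t,t}$ to $D_{\tilde{a}_t}(\dot{x}_t)$ via minimax optimality of $a^{\mathrm{II}}_t$, then spend the concentration slack $2h_ts_t\left(\dot{x}_t,A^{\dagger}_t\left(\dot{x}_t\right)\right)$ on the leading UCB term while using only $U_t\geq f$ on the subtracted term, and finally chain $f\left(\dot{x}_t,A^{\dagger}_t\left(\dot{x}_t\right)\right)\leq f\left(\dot{x}_t,A^*\left(\dot{x}_t\right)\right)$ with $r(\dot{x}_t,\tilde{a}_t)\leq MR_t$. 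Your observation about which one-sided bound attaches the slack to which term is exactly the bookkeeping the paper performs by adding and subtracting the true-reward difference.
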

Then, in order to show that $\overline{D}_{a^{\textrm{II}}_t,t}$ approaches $MR_t$, we need to prove that $2h_ts_t\left(\dot{x}_t,  A^{\dagger}_t\left(\dot{x}_t \right)\right)$ vanishes
as $t$ increases. But, this is difficult because the considered sequence $\left\lbrace \dot{x}_t,  A^{\dagger}_t\left(\dot{x}_t \right)\right\rbrace $ is different from the actual sequence of context and selected arms $\left\lbrace x_t, a_t^{\textrm{II}}\right\rbrace$ under \ouralgthree.
To circumvent this issue,
we first introduce the concept of $\epsilon-$ covering \cite{math_epsilon_covering}. Denote $\Phi=\mathcal{X}\times\mathcal{A}$ as the context-arm space. If a finite set $\Phi_{\epsilon}$ is an $\epsilon-$ covering of the space $\Phi$, then for each $\varphi\in\Phi$, there exists at least one $\bar{\varphi}\in\Phi_{\epsilon}$ satisfying $\|\varphi-\bar{\varphi} \|_2\leq \epsilon$. Denote $\mathcal{C_\epsilon}\left(\bar{\varphi} \right) =\left\lbrace \varphi\mid\|\varphi-\bar{\varphi} \|_2\leq \epsilon\right\rbrace $ as the cell with respect to $\bar{\varphi}\in\Phi_{\epsilon}$. Since the dimension of the entries in $\Phi$ is $d$, the size of the $\Phi_{\epsilon}$ is $|\Phi_{\epsilon}|\sim O\left( \frac{1}{\epsilon^d}\right) $. Besides, we assume the mapping function $\phi$ is Lipschitz continuous, i.e. $\forall x, y\in \Phi$, $\|\phi(x)-\phi(y)\|\leq L_{\phi}\|x-y\|$.
Next, we prove the following proposition to bound the sum of confidence widths under some conditions.

\begin{proposition}\label{prop:sumconfwd}
	Let $\mathcal{X}_T=\left\lbrace x_{a_1,1},\cdots,  x_{a_T,T}\right\rbrace$ be the sequence of true contexts and selected arms by bandit algorithms and $\dot{\mathcal{X}}_T=\left\lbrace  \dot{x}_{\dot{a}_1,1},\cdots,  \dot{x}_{\dot{a}_T,T} \right\rbrace $ be the considered sequence of contexts and actions. Suppose that both $x_{a_t,t}$  and $\dot{x}_{\dot{a}_t,t}$belong to $\Phi$. Besides, with an $\epsilon-$ covering $\Phi_{\epsilon}\subseteq \Phi$, $\epsilon> 0$,  there exists $ \kappa \geq 0$ such that two conditions are satisfied: First, $\forall \bar{\varphi}\in \Phi_{\epsilon}$, $\exists t\leq \left \lceil  \kappa/\epsilon^d\right \rceil$ such that $x_{a_t,t}\in \mathcal{C}_{\epsilon}\left(\bar{\varphi} \right) $. Second, if at round $t$, $x_{a_t,t}\in \mathcal{C}_{\epsilon}\left(\bar{\varphi} \right) $ for some $\bar{\varphi}\in \Phi_{\epsilon}$, then $\exists  t\leq t'< t+\left \lceil  \kappa/\epsilon^d\right \rceil$ such that $x_{a_t',t'}\in \mathcal{C}_{\epsilon}\left(\bar{\varphi} \right) $. If the mapping function $\phi$ is Lipschitz continuous with constant $L_{\phi}$, the sum of squared confidence widths is bounded as
	\begin{equation*}
	\begin{split}
	\sum_{t=1}^T \!s^2_{t}\left(\dot{x}_{\dot{a}_t,t}\right)
	\!\!\leq \!\!\sqrt{T}\!\left(\! 4\tilde{d}\log\left(\!1\!+\!\frac{T}{\tilde{d}\lambda} \!\right)\!+\!\frac{1}{\lambda}\!	\right) \!\!+\!\!\frac{\!8L^2_{\phi}\kappa^{2/d}\!}{\lambda}T^{1-1/d},
	\end{split}
	\end{equation*}
	where $d$ is the dimension of $x_{a_t,t}$, $\tilde{d}$ is the effective dimension defined in the proof, $s^2_{t}\left(\dot{x}_{\dot{a}_t,t}\right)\!=\!\phi(\dot{x}_{\dot{a}_t,t})^{\top}\mathbf{V}^{-1}_{t-1}\phi(\dot{x}_{\dot{a}_t,t})$ and $\mathbf{V}_t\!=\!\lambda\mathbf{I}+\sum_{s=1}^t\phi(x_{a_s,s})\phi(x_{a_s,s})^{\top}$.
\end{proposition}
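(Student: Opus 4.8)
The plan is to reduce the sum of confidence widths evaluated along the \emph{auxiliary} sequence $\dot{\mathcal{X}}_T$ to the sum of confidence widths along the \emph{actual} sequence $\mathcal{X}_T$, for which Lemma~\ref{lma:sumvarbatch} already applies. The difficulty is entirely the point/index mismatch: $s^2_t(\dot{x}_{\dot a_t,t})=\phi(\dot x_{\dot a_t,t})^\top \mathbf V_{t-1}^{-1}\phi(\dot x_{\dot a_t,t})$ is a quadratic form in $\phi(\dot x_{\dot a_t,t})$, whereas the Gram matrix $\mathbf V_{t-1}$ is built only from the actual plays $\{x_{a_s,s}\}$. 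The two covering hypotheses are exactly what let me replace each auxiliary point by a \emph{recently played} actual point that is both close in feature space (via Lipschitzness of $\phi$) and has a small index gap (so that matrix monotonicity is usable).

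First I would fix $\epsilon>0$, write $N_\epsilon=\lceil\kappa/\epsilon^d\rceil$, and establish the key combinatorial fact: from condition~1 (every cell is entered by round $N_\epsilon$) and condition~2 (a cell entered at round $t$ is re-entered before round $t+N_\epsilon$), a short induction shows that, after its first visit, every cell $\bar\varphi\in\Phi_\epsilon$ is visited by the actual sequence at least once inside every window of $N_\epsilon$ consecutive rounds. Consequently, for each round $t>N_\epsilon$, if $\dot x_{\dot a_t,t}$ lies in $\mathcal C_\epsilon(\bar\varphi)$ then there is an actual play $x_{a_{s(t)},s(t)}$ in the same cell with $s(t)\le t$ and $t-s(t)<N_\epsilon$. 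Lying in a common cell of radius $\epsilon$ gives $\|\dot x_{\dot a_t,t}-x_{a_{s(t)},s(t)}\|\le 2\epsilon$, and Lipschitzness gives $\|\phi(\dot x_{\dot a_t,t})-\phi(x_{a_{s(t)},s(t)})\|\le 2L_\phi\epsilon$.

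Next comes the per-round estimate. Writing $\phi(\dot x_{\dot a_t,t})=\phi(x_{a_{s(t)},s(t)})+\delta_t$ with $\|\delta_t\|\le 2L_\phi\epsilon$, the inequality $(u+v)^\top M(u+v)\le 2u^\top Mu+2v^\top Mv$ for the PSD matrix $M=\mathbf V_{t-1}^{-1}$ yields $s^2_t(\dot x_{\dot a_t,t})\le 2\,\phi(x_{a_{s(t)},s(t)})^\top\mathbf V_{t-1}^{-1}\phi(x_{a_{s(t)},s(t)})+\tfrac{2}{\lambda}\|\delta_t\|^2$, where I use $\mathbf V_{t-1}^{-1}\preceq\tfrac1\lambda\mathbf I$. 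Since $s(t)\le t$ gives $\mathbf V_{t-1}\succeq\mathbf V_{s(t)-1}$ and hence $\mathbf V_{t-1}^{-1}\preceq\mathbf V_{s(t)-1}^{-1}$, the first term is at most $2\,s^2_{s(t)}(x_{a_{s(t)},s(t)})$, while the second is at most $\tfrac{8L_\phi^2\epsilon^2}{\lambda}$. For the at most $N_\epsilon$ early rounds $t\le N_\epsilon$ (where no past visit is guaranteed) I just use the crude bound $s^2_t(\cdot)\le\tfrac1\lambda$ coming from the normalization $k(\cdot,\cdot)\le 1$.

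Finally I would sum and optimize. The $\delta_t$-terms contribute $\tfrac{8L_\phi^2\epsilon^2}{\lambda}T$, and the early rounds contribute $\le N_\epsilon/\lambda$. For the main terms, the revisit fact shows each actual index $s$ can equal $s(t)$ for at most $N_\epsilon$ rounds $t$ (the next visit to its cell occurs before $s+N_\epsilon$, forcing $t\in[s,s+N_\epsilon)$), so $\sum_t 2\,s^2_{s(t)}(x_{a_{s(t)},s(t)})\le 2N_\epsilon\sum_{s}s^2_s(x_{a_s,s})\le 4N_\epsilon\gamma_T$ by Lemma~\ref{lma:sumvarbatch}. Choosing $\epsilon$ so that $N_\epsilon=\sqrt T$, i.e. $\epsilon^d=\kappa T^{-1/2}$ and hence $\epsilon^2=\kappa^{2/d}T^{-1/d}$, turns these into $\tfrac{\sqrt T}{\lambda}+4\sqrt T\,\gamma_T+\tfrac{8L_\phi^2\kappa^{2/d}}{\lambda}T^{1-1/d}$, and bounding $\gamma_T\le\tilde d\log(1+\tfrac{T}{\tilde d\lambda})$ with the effective dimension $\tilde d$ gives the claimed inequality. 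I expect the main obstacle to be the combinatorial revisit/counting argument of the second and fourth paragraphs together with the index bookkeeping that keeps $s(t)\le t$ so that matrix monotonicity applies; the remaining steps are the routine feature-perturbation and elliptical-potential machinery.
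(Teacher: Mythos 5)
Your proposal is correct, but it follows a genuinely different bookkeeping strategy than the paper's. The paper partitions the rounds into $N_\epsilon=\lceil\kappa/\epsilon^d\rceil$ interleaved arithmetic-progression groups $\mathcal{G}_l$, matches each considered point to a \emph{future} true visit of its cell (the group spacing guarantees the matched point already appears in $\mathbf{V}_{t-1}$), assembles per-group auxiliary matrices $\mathbf{W}_{l,m}$ from the matched points, and runs the log-det (elliptical-potential) argument separately inside each of the $N_\epsilon$ groups. You instead match each considered point at round $t$ to the most recent \emph{past} true visit $s(t)\in(t-N_\epsilon,t]$ of its cell, use the monotonicity $\mathbf{V}_{t-1}^{-1}\preceq\mathbf{V}_{s(t)-1}^{-1}$ to dominate the matched quadratic form by the actual confidence width $s^2_{s(t)}\left(x_{a_{s(t)},s(t)}\right)$, and absorb the repetition by a multiplicity count (each $s$ serves as $s(t)$ for at most $N_\epsilon$ rounds, which follows directly from $t-s(t)<N_\epsilon$, independently of your parenthetical revisit argument), so that Lemma~\ref{lma:sumvarbatch} is invoked only once, on the full true sequence. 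The two proofs share the same skeleton --- covering, $2\epsilon$-proximity within a cell, Lipschitz perturbation costing $\tfrac{2}{\lambda}\|\delta_t\|^2\leq\tfrac{8L_\phi^2\epsilon^2}{\lambda}$, the crude $1/\lambda$ bound for the $O(N_\epsilon)$ unmatched rounds, and the calibration $N_\epsilon=\sqrt{T}$ --- and both tacitly require the non-vacuous reading of the second condition (a re-visit at some $t'>t$; as literally written, $t'=t$ makes the condition trivial and the bound unprovable). What each buys: your route is shorter and avoids the group construction and the matrices $\mathbf{W}_{l,m}$ entirely; the paper's grouping pays off only in the resulting effective dimension --- its $\tilde{d}$ is the maximal rank of a group Gram matrix built from $M\approx\sqrt{T}$ points (hence never exceeding $\sqrt{T}$), whereas your argument produces the bound with $\tilde{d}$ replaced by $\bar{d}$, the rank of the full $\mathbf{K}_T$, which can be as large as $T$ in an infinite-dimensional RKHS. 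Since the proposition explicitly defers the definition of $\tilde{d}$ to the proof, your bound has the stated form and your proof stands; but in the kernelized, non-finite-rank regime it is quantitatively weaker than the paper's, while in the linear or finite-rank case the two coincide.
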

\begin{remark}
	The conditions in Proposition \ref{prop:sumconfwd} guarantee that the
	time interval between the events that true context-arm feature lies in the same cell is not larger than $\left \lceil  \kappa/\epsilon^d\right \rceil$, which is proportional to the size of the $\epsilon$-covering $|\Phi_{\epsilon}|$.
	That means,
	similar contexts and selected arms occur in the true sequence repeatedly if $T$ is large enough. If contexts are sampled from a bounded space $\mathcal{X}$ with some distribution, then similar contexts will occur repeatedly.
	Also, note that the arm in our considered sequence $A^{\dagger}_t\left( \dot{x}_t\right)$ is the UCB-optimal arm, which becomes close to the optimal arm for $\dot{x}_t$ if the confidence width is sufficiently small.
	Hence,
	there exists some context error budget
	sequence $\left\lbrace \Delta_t\right\rbrace $ such that, starting from a certain round $T_0$, the two conditions are satisfied. The two conditions in Proposition \ref{prop:sumconfwd} are
	mainly for theoretical analysis of \ouralgthree. 
\end{remark}

By Lemma \ref{lma:boudnsubopt} and Proposition \ref{prop:sumconfwd},
we bound the cumulative regret of \ouralgthree. 

\begin{theorem}\label{the:regretminwd}
	If \ouralgthree is used to select arms with imperfect context and as time goes on, and the conditions in Proposition \ref{prop:sumconfwd} are satisfied, then for any true context $x_t\in \mathcal{B}_{\Delta}(\hat{x}_t)$ at round $t, t=1,\cdots, T$,  with a probability of $1-\delta,\delta\in (0,1)$, we have the following bound on the cumulative true regret:
	\begin{equation*}\label{equ:regretminwd}
	\begin{split}
	R_T \leq &\sum_{t}^{T}MR_t+2h_TT^{\frac{3}{4}}\sqrt{\left( 4\tilde{d}\log\left(1+\frac{T}{\tilde{d}\lambda} \right)	+\frac{1}{\lambda} 	\right)} +\\
	&4\sqrt{\frac{2}{\lambda}}L_{\phi}\kappa^{\frac{1}{d}}h_TT^{1-\frac{1}{2d}}+2h_T\!\!\sqrt{2T\bar{d}\log (1\!+\!\frac{T}{\bar{d}\lambda} )},
	\end{split}
	\end{equation*}
	where $MR_t$ is the optimal worst-case regret for round $t$ in Eqn.~\eqref{eqn:optworstreg}, $d$ is the dimension of $x_{a_t,t}$, $\tilde{d}$ is the effective dimension defined in the proof of Proposition \ref{prop:sumconfwd}, $\bar{d}$ is the rank of $\mathbf{K}_t$ and $h_T$ is given in Lemma \ref{lma:KernelConcent}.
\end{theorem}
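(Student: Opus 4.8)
The plan is to bound $R_T=\sum_{t=1}^T r(x_t,a^{\mathrm{II}}_t)$ term by term, starting from the instantaneous bound in Eqn.~\eqref{eqn:boundregretbysubopt}. For the arm $a^{\mathrm{II}}_t$ chosen by \ouralgthree, that inequality reads $r\left(x_t,a^{\mathrm{II}}_t\right)\leq \overline{D}_{a^{\mathrm{II}}_t,t}+2h_ts_t\left(x_t,a^{\mathrm{II}}_t\right)$. I would then invoke Lemma~\ref{lma:boudnsubopt} to replace the worst-case degradation by its bound $\overline{D}_{a^{\mathrm{II}}_t,t}\leq MR_t+2h_ts_t\left(\dot{x}_t,A^{\dagger}_t(\dot{x}_t)\right)$. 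Combining the two and summing over $t$ yields
\begin{equation*}
R_T\leq \sum_{t=1}^T MR_t + 2\sum_{t=1}^T h_t s_t\!\left(\dot{x}_t,A^{\dagger}_t(\dot{x}_t)\right) + 2\sum_{t=1}^T h_t s_t\!\left(x_t,a^{\mathrm{II}}_t\right),
\end{equation*}
so the remaining task is to show each confidence-width sum is sub-linear and matches the stated form.

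The second sum runs along the \emph{actually played} sequence $\{x_t,a^{\mathrm{II}}_t\}$, so it is the routine one. Since $\gamma_t$ is non-decreasing we have $h_t\leq h_T$, and a Cauchy--Schwarz step followed by Lemma~\ref{lma:sumvarbatch} gives $\sum_{t}h_t s_t\left(x_t,a^{\mathrm{II}}_t\right)\leq h_T\sqrt{T\sum_t s^2_t\left(x_t,a^{\mathrm{II}}_t\right)}\leq h_T\sqrt{2T\gamma_T}$; bounding $\gamma_T\leq \bar d\log\left(1+\frac{T}{\bar d\lambda}\right)$ then produces exactly the last term $2h_T\sqrt{2T\bar d\log\left(1+\frac{T}{\bar d\lambda}\right)}$.

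The first sum is the crux, and is precisely where Proposition~\ref{prop:sumconfwd} is needed: the widths $s_t\left(\dot{x}_t,A^{\dagger}_t(\dot{x}_t)\right)$ are evaluated along the \emph{considered} sequence $\dot{\mathcal{X}}_T=\{\dot{x}_{\dot a_t,t}\}$ with $\dot a_t=A^{\dagger}_t(\dot{x}_t)$, which is \emph{not} the sequence that updates $\mathbf{V}_t$, so the elliptical-potential bound of Lemma~\ref{lma:sumvarbatch} no longer applies. Under the covering/recurrence hypotheses this mismatch is exactly what Proposition~\ref{prop:sumconfwd} resolves, so I would apply it to $\dot{\mathcal{X}}_T$ to obtain $\sum_t s^2_t\left(\dot x_{\dot a_t,t}\right)\leq \sqrt{T}\left(4\tilde d\log\left(1+\frac{T}{\tilde d\lambda}\right)+\frac{1}{\lambda}\right)+\frac{8L^2_\phi\kappa^{2/d}}{\lambda}T^{1-1/d}$. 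Then Cauchy--Schwarz (using $h_t\leq h_T$ again) gives $\sum_t h_t s_t\left(\dot x_t,A^{\dagger}_t(\dot x_t)\right)\leq h_T\sqrt{T}\sqrt{A+B}$ with $A=\sqrt{T}\left(4\tilde d\log\left(1+\frac{T}{\tilde d\lambda}\right)+\frac{1}{\lambda}\right)$ and $B=\frac{8L^2_\phi\kappa^{2/d}}{\lambda}T^{1-1/d}$; splitting via $\sqrt{A+B}\leq\sqrt{A}+\sqrt{B}$ separates the $\sqrt{T}\cdot T^{1/4}=T^{3/4}$ contribution from the $\sqrt{T}\cdot T^{(1-1/d)/2}=T^{1-1/(2d)}$ contribution, which after the factor of $2$ gives the two middle terms $2h_TT^{3/4}\sqrt{4\tilde d\log\left(1+\frac{T}{\tilde d\lambda}\right)+\frac{1}{\lambda}}$ and $4\sqrt{\frac{2}{\lambda}}L_\phi\kappa^{1/d}h_TT^{1-1/(2d)}$.

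The main obstacle throughout is this gap between the played and the considered sequences; everything else is an assembly of Cauchy--Schwarz, the monotonicity $h_t\leq h_T$, and the two potential-type bounds. Collecting the four pieces gives the claimed inequality, and since the largest exponent is $\max\{3/4,\,1-1/(2d)\}<1$ (and $h_T$ grows only logarithmically through $\gamma_T$), the gap $R_T-\sum_t MR_t$ is sub-linear, thereby certifying the Type-II robustness of \ouralgthree in the sense of Definition~\ref{def:type-II}.
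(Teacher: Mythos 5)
Your proposal is correct and follows essentially the same route as the paper's own proof: the same decomposition via Eqn.~\eqref{eqn:boundregretbysubopt} and Lemma~\ref{lma:boudnsubopt}, Proposition~\ref{prop:sumconfwd} plus Cauchy--Schwarz and $\sqrt{A+B}\leq\sqrt{A}+\sqrt{B}$ for the considered-sequence sum, and Lemma~\ref{lma:sumvarbatch} for the played-sequence sum. You also correctly pinpoint the one genuine subtlety --- that $\{\dot{x}_t, A^{\dagger}_t(\dot{x}_t)\}$ is not the sequence updating $\mathbf{V}_t$, which is exactly why Proposition~\ref{prop:sumconfwd} is needed --- so nothing is missing.
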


\begin{figure*}[!t]	
	\centering
	\subfigure[Robust regret.]{
		\label{fig:regret_pseudo}
		\includegraphics[width=0.32\textwidth]{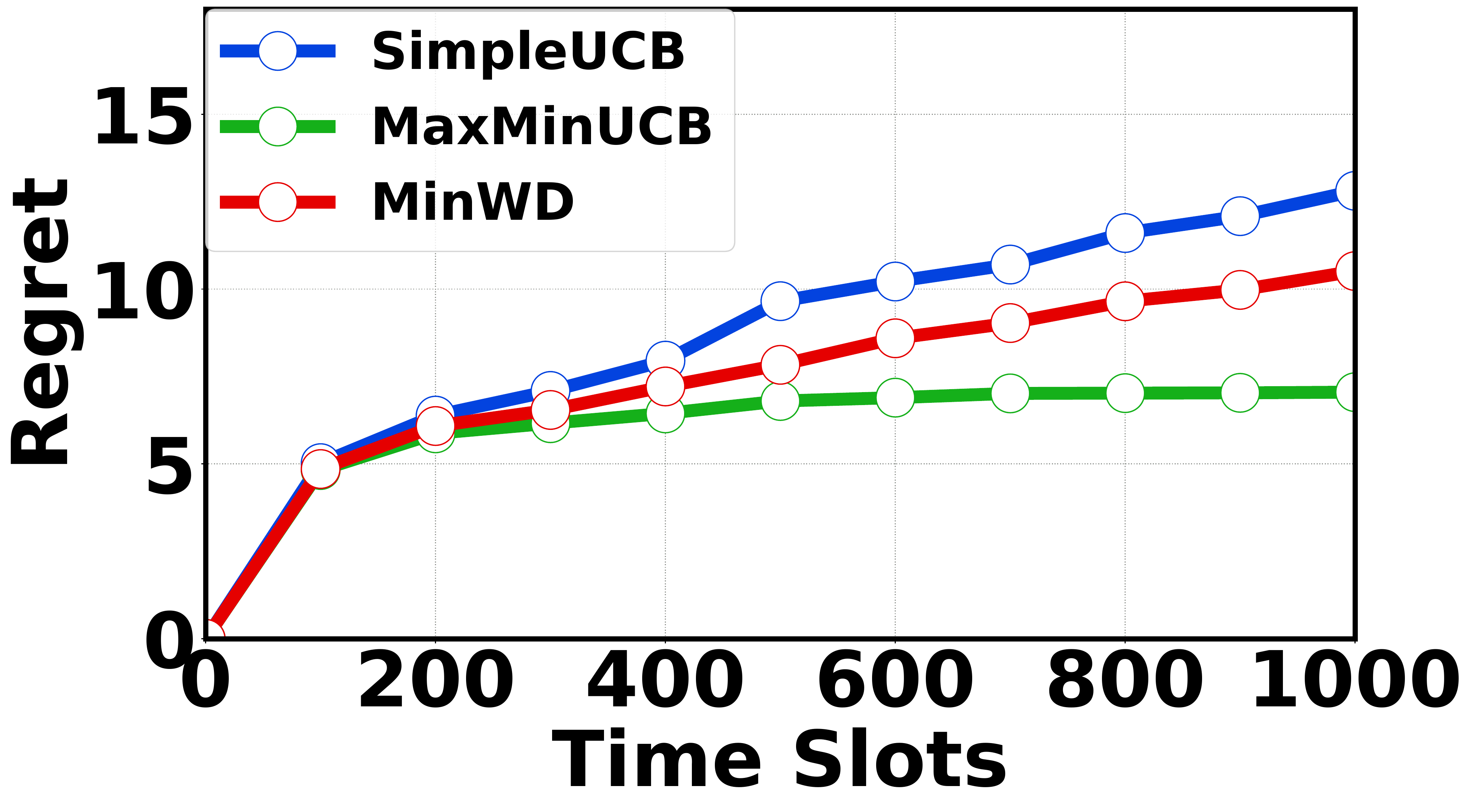}
	}%
	\subfigure[Worst-case regret.]{
		\label{fig:regret_worst}
		\includegraphics[width=0.32\textwidth]{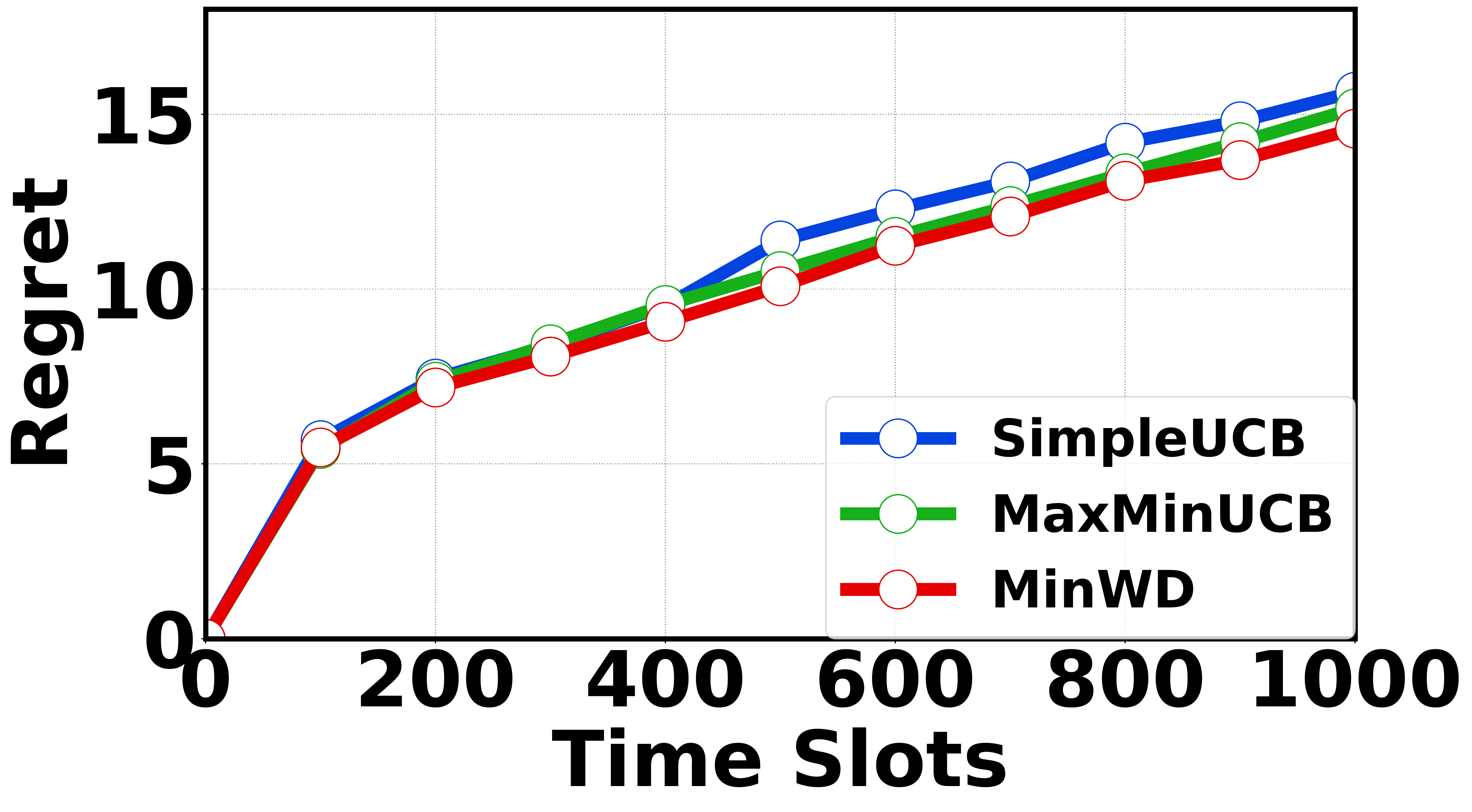}
	}%
	\subfigure[True regret.]{
	\label{fig:regret_true} \includegraphics[width={0.32\textwidth}]{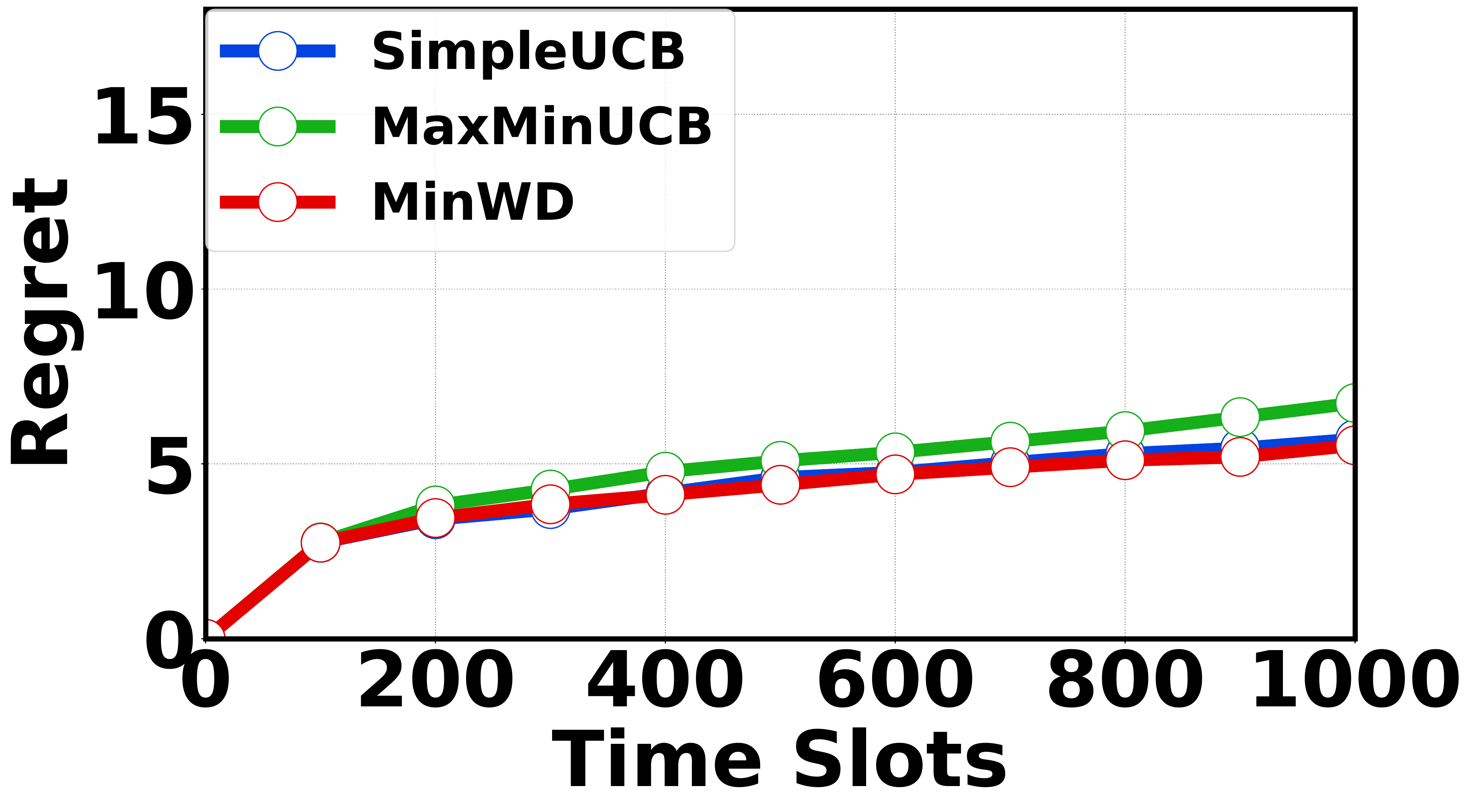}
}
	\centering
	\caption{Different cumulative regret objectives for different algorithms.}
	\label{fig:simulationregret}
\end{figure*}

\begin{remark}
	Theorem~\ref{the:regretminwd} shows that by \ouralgthree, $R_T-\sum_{t=1}^TMR_t$ is sub-linear w.r.t. $T$ and thus Type-II robustness is effectively achieved according to Definition \ref{def:type-II}. This means the true regret bound
	approaches $\sum_{t}^{T}MR_t$, the optimal worst-case regret, asymptotically.
\end{remark}

Next, in parallel with \ouralgtwo,
we derive the bound of true reward for \ouralgthree.
\begin{corollary}\label{cor:rewardminwd}
	If \ouralgthree is used to select arms with imperfect context and as time goes on, and the true sequence of context and arm obeys the conditions in Proposition \ref{prop:sumconfwd}, then for any true contexts $x_t\in \mathcal{B}_{\Delta}(\hat{x}_t)$ at round $t, t=1,\cdots, T$, with a probability of $1-\delta,\delta\in (0,1)$, 	we have the following lower bound of the cumulative reward
	\begin{equation*}\label{equ:rewardminwd}
	\begin{split}
	F_T &\geq \!\sum_{t=1}^{T}\left[ MF_t\!-\!MR_t\right]\!-\!2h_TT^{\frac{3}{4}}\!\sqrt{\!\left( 4\tilde{d}\log\left(1+\frac{T}{\tilde{d}\lambda} \right)\!\!	+\!\!\frac{1}{\lambda} 	\right)}\\
	&-4\sqrt{\frac{2}{\lambda}}L_{\phi}\kappa^{\frac{1}{d}}h_TT^{1-\frac{1}{2d}}-2h_T\!\!\sqrt{2T\bar{d}\log (1\!+\!\frac{T}{\bar{d}\lambda} )},
	\end{split}
	\end{equation*}
	where $MR_t$ is the optimal worst-case regret for round $t$ in Eqn.~\eqref{eqn:optworstreg}, $d$ is the dimension of $x_{a_t,t}$, $\tilde{d}$ is the effective dimension defined in the proof of Proposition \ref{prop:sumconfwd}, $\bar{d}$ is the rank of $\mathbf{K}_t$, and $h_T$ is given in Lemma \ref{lma:KernelConcent}.
\end{corollary}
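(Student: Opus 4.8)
The plan is to obtain the reward lower bound as a direct consequence of the regret upper bound in Theorem~\ref{the:regretminwd}, using the elementary identity that ties $F_T$ to $R_T$ together with a pointwise comparison between the oracle reward at the true context and the optimal worst-case reward $MF_t$.

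First I would invoke the definition of the true regret in Eqn.~\eqref{eqn:trueregret} to write $R_T=\sum_{t=1}^T f(x_t,A^*(x_t))-\sum_{t=1}^T f(x_t,a_t)$, which, since $F_T=\sum_{t=1}^T f(x_t,a_t)$, rearranges to $F_T=\sum_{t=1}^T f(x_t,A^*(x_t))-R_T$. Hence lower-bounding $F_T$ reduces to lower-bounding $\sum_{t=1}^T f(x_t,A^*(x_t))$ and upper-bounding $R_T$; the latter is exactly what Theorem~\ref{the:regretminwd} provides.

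The key step is the pointwise inequality $f(x_t,A^*(x_t))\geq MF_t$ for every round $t$. Recalling from Eqn.~\eqref{eqn:worstreward} that $MF_t=\max_{a\in\mathcal{A}}\min_{x\in\mathcal{B}_{\Delta}(\hat{x}_t)}f(x,a)$ and that the true context obeys $x_t\in\mathcal{B}_{\Delta}(\hat{x}_t)$, for each arm $a$ we have $f(x_t,a)\geq\min_{x\in\mathcal{B}_{\Delta}(\hat{x}_t)}f(x,a)$; taking the maximum over $a\in\mathcal{A}$ on both sides gives $f(x_t,A^*(x_t))=\max_{a}f(x_t,a)\geq MF_t$. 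Summing over $t$ yields $\sum_{t=1}^T f(x_t,A^*(x_t))\geq\sum_{t=1}^T MF_t$.

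Finally I would combine the two ingredients: $F_T=\sum_{t=1}^T f(x_t,A^*(x_t))-R_T\geq\sum_{t=1}^T MF_t-R_T$, and then substitute the bound $R_T\leq\sum_{t=1}^T MR_t+(\text{sub-linear terms})$ from Theorem~\ref{the:regretminwd}. Collecting $\sum_{t=1}^T[MF_t-MR_t]$ and carrying over the same three sub-linear terms reproduces the claimed inequality. Because the result is effectively Theorem~\ref{the:regretminwd} read through the reward-regret identity, I do not expect a substantive obstacle; the only delicate point is the pointwise bound $f(x_t,A^*(x_t))\geq MF_t$, whose validity rests entirely on the membership $x_t\in\mathcal{B}_{\Delta}(\hat{x}_t)$ and the max-min form of $MF_t$.
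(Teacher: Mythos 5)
Your proposal is correct, and it takes a genuinely different (and more economical) route than the paper. The paper's proof re-runs the UCB machinery at the level of individual rounds: it lower-bounds $f(x_t,a^{\textrm{II}}_t)$ by $U_t(x_t,a^{\textrm{II}}_t)-2h_ts_t(x_t,a^{\textrm{II}}_t)$, decomposes the UCB via the degradation $D_{a^{\textrm{II}}_t,t}(x_t)\leq \overline{D}_{a^{\textrm{II}}_t,t}$, invokes the max-min inequality $\min_{x}\max_{a}U_t(x,a)\geq\max_{a}\min_{x}U_t(x,a)$ to bridge \ouralgthree to the \ouralgtwo objective, and then uses the oracle arm $\bar{a}_t$ together with Lemma \ref{lma:KernelConcent} to extract $MF_t$, finally summing with the bounds behind Theorem \ref{the:regretminwd}. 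You instead treat Theorem \ref{the:regretminwd} as a black box and combine it with two purely deterministic facts: the identity $F_T=\sum_{t=1}^Tf(x_t,A^*(x_t))-R_T$ from Eqn.~\eqref{eqn:trueregret}, and the pointwise inequality $f(x_t,A^*(x_t))=\max_{a}f(x_t,a)\geq\max_{a}\min_{x\in\mathcal{B}_{\Delta}(\hat{x}_t)}f(x,a)=MF_t$, which holds precisely because $x_t\in\mathcal{B}_{\Delta}(\hat{x}_t)$; since these two steps involve no randomness, the corollary inherits the probability $1-\delta$ directly from Theorem \ref{the:regretminwd}, and the three sub-linear terms carry over verbatim. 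Your argument is shorter and more modular (any future sharpening of Theorem \ref{the:regretminwd} transfers immediately), whereas the paper's route has the side benefit of exhibiting the structural link between \ouralgthree and \ouralgtwo at the UCB level — a connection the authors explicitly flag as "bridging" the two algorithms — which is lost in your shortcut but is not needed to establish the stated bound.
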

\begin{remark}
	Corollary \ref{cor:rewardminwd} shows that as $t$ becomes sufficiently large, the difference between the optimal worst-case reward and the true reward of the selected arm is no larger than the optimal worst-case regret $MR_t$. With perfect context, we have $MR_t=0$, and hence \ouralgtwo and \ouralgthree both asymptotically maximize the reward, implying that these two types of robustness are the same under perfect context.
\end{remark}
\subsection{Summary of Main Results}
\begin{table}[!t]
	\caption{Summary of Analysis}
	\label{tab:regretanalysis}
	\begin{center}
		\begin{tabular} { | p{2.1cm} | p{2.5cm}| p{2.5cm} | }
			\hline
			\multicolumn{1}{|c|}{\textbf{Algorithms }}&\textbf{Regret}&\textbf{Reward}\\
			\hline
			\multicolumn{1}{|c|}{\ouralgtwo}& $\sum_{t=1}^T\!\overline{MR}_t\!+O(\!\sqrt{T\!\!\log \!T}\!)$& $\sum_{t=1}^T\!\!MF_t-O(\!\sqrt{T\!\!\log \!T}\!)$\\
			\hline
			\multicolumn{1}{|c|}{\ouralgthree}&$\sum_{t=1}^T\!\!MR_t\!+\!O(T^{\frac{3}{4}}\!\!\sqrt{\log \!T}\!+\!T^{1-\frac{1}{2d}}\!+\!\sqrt{T\!\!\log \!T}\!)$ &$\sum_{t}^{T}\!\!\left[ MF_t\!-\!MR_t\right]\!-\!O(T^{\frac{3}{4}}\!\sqrt{\log T}\!+\!T^{1-\frac{1}{2d}}+\!\sqrt{T\!\!\log \!T}\!)$ \\
			\hline
		\end{tabular}
	\end{center}
\end{table}

We summarize our 
analysis
of \ouralgtwo and \ouralgthree in Table~\ref{tab:regretanalysis},
while the algorithms details are available in Algorithm \ref{alg:robust_arm}. In the table, $d$ is the dimension of context-arm vector $[x,a]$,
$\overline{MR}_t=\max_{x \in \mathcal{B}_{\Delta}(\hat{x}_t)} f\left(x,A^*\left( x \right)  \right) -MF_t$, and $MF_t$ and $MR_t$ are defined in Eqn.~\eqref{eqn:worstreward}
and~\eqref{eqn:optworstreg}, respectively.  Type-I and type-II robustness objectives are achieved by \ouralgtwo and \ouralgthree respectively.

\section{Simulation}\label{sec:simulation}

Edge computing is a promising technique to meet the demand of latency-sensitive applications 
\cite{Edge_Survey_Weisong_Shi_IoTJournal_2016_7488250}.
Given
multiple heterogeneous edge datacenters
located in different locations, which one should be selected?
Specifically, each edge
datacenter is viewed as an arm, and the users' workload is context that can only be predicted prior to arm selection.
Our goal is to learn datacenter selection
to optimize the latency in a robust manner
given imperfect workload  information.
We assume that the service rate of the edge datacenter $a$, $a \in\mathcal{A}$, is $\mu_a$, the computation latency  satisfies an M/M/1 queueing model and the average communication delay between this datacenter
and users is $p_a$.
Hence, the average total latency cost can be expressed as
$
l(x,a)=p_a\cdot x+\frac{x}{\mu_a-x}
$
which is commonly-considered in the literature \cite{LinWiermanAndrewThereska,Shaolei_EdgeComputing_Online_Learning_AutoScaling_Renewable_JieXu_2018,LinLiuWiermanAndrew_IGCC_2012}.
The detailed settings are given in \cite{yang2021robust}.

In Fig.~\ref{fig:simulationregret}, we compare different algorithms in terms of three cumulative regret objectives: robust regret in Eqn.~\eqref{eqn:pseudoregret}, worst-case regret in Eqn.~\eqref{eqn:worstregret} and true regret in Eqn.~\eqref{eqn:trueregret}.
We consider the following algorithms:
\ouralg with imperfect context, \ouralgtwo with imperfect context and \ouralgthree with imperfect context. Given a sequence of true contexts, imperfect context sequence is generated by sampling i.i.d. uniform distribution over $\mathcal{B}_{\Delta}(x_t)$ at each round.  
In the simulations, Gaussian kernel with parameter $0.1$ is used for reward (loss) estimation.
$\lambda$ in Eqn.~\eqref{equ:estimean} is set as $0.1$. The exploration rate is set as $h_t=0.04$.

As is shown in Fig.~\ref{fig:regret_pseudo}, \ouralgtwo has the best performance of robust regret among the three algorithms. This is because \ouralgtwo targets at type-I robustness objective which is equivalent to minimizing the robust regret. However, \ouralgtwo is not the best algorithm in terms of true regret as is shown in Fig.~\ref{fig:regret_true} since robust regret is not an upper or lower bound of true regret. Another robust algorithm \ouralgthree is also better than \ouralg in terms of robust regret, and it has the best performance among the three algorithms in terms of the worst-case regret, as shown in Fig.~\ref{fig:regret_worst}. This is because the regret of \ouralgthree 
approaches the optimal worst-case regret  (Theorem \ref{the:regretminwd}). \ouralgthree also has a good performance of true regret, which coincides with the fact that the worst-case regret is the upper bound of the true regret.
By comparing the three algorithms in terms of the three regret objectives,
we can clearly see that \ouralgtwo and \ouralgthree achieve
performance robustness in terms of the robust regret
and worst-case regret, respectively.

\section{Conclusion}
In this paper, 
considering a bandit setting with imperfect context,
we propose:  \ouralgtwo which  maximizes the worst-case reward;
and \ouralgthree  which  minimizes the worst-case regret.
Our analysis
of \ouralgtwo and \ouralgthree
based on regret and reward bounds
shows that as time goes on, \ouralgtwo
and \ouralgthree both perform as  asymptotically well
as their counterparts that have perfect knowledge of the reward function.
Finally, we 
consider online edge datacenter selection
and run synthetic simulations for evaluation.

\section*{Acknowledgments}
This work was supported in part by the NSF under grants CNS-1551661 and ECCS-1610471.

\appendix
\section{Applications and Simulation Settings}\label{sec:formulation_motivation}

A key novelty of our work is the consideration
of imperfect context for arm selection, which characterizes many
practical applications such as resource management problems where the true context is difficult to obtain for arm selection until revealed later. We list some examples of these applications in this section and provide the simulation settings.

\subsection{Motivating Applications}

\textbf{Cloud Resource Management.}
Cloud computing platforms are crucial infrastructures offering
utility-style computation resources to users on demand.
To optimize the performance metrics such as latency and cost for these applications, efficient online cloud resource management such as dynamic virtual machine scheduling is necessary.  Contextual bandit learning can be employed in this scenario where the exact workload
information (measured in, e.g., how many requests will arrive per unit time)
is crucial context information, but cannot
be known until the workload actually arrives. Instead,
the agent can only predict the upcoming workload by exploiting
the recent workload history plus other applicable system features.
A real-word example is Amazon's predictive scaling that leverages time series prediction to estimate upcoming workload for virtual machine scheduling \cite{Amazon_AWS_AutoScaling_Documentation}.
In this motivating example, the context prediction error comes primarily from the  workload predictor.

\textbf{Energy Scheduling in Smart Grid.}
Energy load is a crucial context information for energy scheduling in smart grid.
However, a recent study \cite{Baosen_Adversarial_LoadForecasting_SmartGrid_eEnergy_2019_10.1145/3307772.3328314}
shows that the energy load predictor in smart grid can be adversarially attacked to produce load estimates (i.e., context for energy scheduling) with higher-than-normal errors.
Thus, our model also captures a novel adversarial
setting where  erroneously
predicted context is presented to the agent
for arm selection. This example shows
that the use of machine learning-based predictor for acquiring context
to facilitate the agent's arm selection
can also open a new attack surface --- an outside attacker may adversarially modify the predicted context --- which requires a robust algorithm with a provable
worst-case performance guarantee.

\textbf{Online Edge Datacenter Selection.}
With the rapidly increasing number of devices
at the Internet edge,
computational demand  by latency-sensitive applications (e.g.,
assisted driving and virtual reality) has been
escalating. Edge computing is a promising solution to meet the demand, which deploys computation resources at densely-distributed edge datacenters
close to end users and thus reduces the overall latency \cite{Edge_Survey_Weisong_Shi_IoTJournal_2016_7488250}.
Since users' workloads can be processed
in multiple edge datacenters,
dynamic
selection of edge datacenters
plays a key role
for minimizing the overall latency --- given
multiple heterogeneous edge datacenters
located in different locations, which one should be selected?
Here, servers in each edge datacenter
can be either virtual machines rented from third-party service providers
or physical servers owned by the edge computing provider itself.
We refer to this problem as {online edge datacenter selection}.
Our considered bandit
setting applies to the problem of online edge datacenter selection.
Specifically, each edge
datacenter is viewed as an arm, and the users' workload is context that can only be predicted prior to arm selection.
Our goal is to dynamically select edge datacenters to optimize the latency in a robust manner given imperfect information about users' workloads.

\subsection{Simulation Settings in Section \ref{sec:simulation}}
In our simulation, we apply our algorithms to online edge
datacenter selection in the context of edge computing.
We consider users' workloads can be
processed in one of four available edge datacenters, each having different computation capabilities and communication latencies.
Here, we consider a simple latency model to capture first-order effects.
Concretely, we assume that the service rate of the edge datacenter $a$, $a \in\mathcal{A}=\{1,2,3,4\}$, is $\mu_a$, the computation latency  satisfies an M/M/1 queueing model and the average communication delay between this datacenter
and users is $p_a$. The values of $\mu_a$ and $p_a$ are shown
in Table~\ref{tab:setting_simulation}. 
In the simulations, perfect context sequence $\{x_t\}$ is generated by sampling i.i.d. uniform distribution between 10 and 30 for each round, while the defense budget $\Delta$ is set as 2.
The average total latency cost can be expressed as
\begin{equation}\label{equ:latency}
	f(x,a)=\frac{x}{\mu_a-x}+p_a\cdot x
\end{equation}
whose inverse can be equivalently viewed as the reward in our model.
\begin{table}[!h]
	\caption{Simulation Settings}
	\label{tab:setting_simulation}
	\begin{center}
		\begin{tabular} { | p{3cm} | p{0.9cm}| p{0.9cm} |p{0.9cm} |p{0.9cm} | }
			\hline
			\multicolumn{1}{|c|}{\textbf{Datacenter $a$}}&\textbf{I}&\textbf{II}&\textbf{III}&\textbf{IV}\\
			\hline
			\multicolumn{1}{|c|}{${\mu}_a$}& 35& 38&45&51\\
			\hline
			\multicolumn{1}{|c|}{$p_a$}& 0.04& 0.05&0.074&0.088\\
			\hline
		\end{tabular}
	\end{center}
\end{table}

While we use
the simple latency model in Eqn.~\eqref{equ:latency} for generating the
ground-true cost,
the function form and parameters (e.g., $\mu_a$) may
not be known to the agent and needs to be learnt based on latency feedback and revealed context.
Note that some practical factors (e.g., workload parallelism) are beyond the scope of our analysis and incorporating them into our simulation
does not add substantially to our main contribution.

\section{Algorithm and Proofs Related to SimpleUCB}

\subsection{Algorithm of \ouralg}
We describe  \ouralg in Algorithm~\ref{alg:standardUCB}.

\begin{algorithm}[!h]
	\caption{Simple UCB (\ouralg)}\label{alg:standardUCB}
	\begin{algorithmic}
		\FOR {$t=1,\cdots,T$}
		\STATE   Receive imperfect context $\hat{x}_t$.
		\STATE   Select arm $a^{\dagger}_t$ as the UCB-optimal arm $A^{\dagger}_t\left(\hat{x}_t \right) $ defined in Definition \ref{def:ucboptarm}.
		\STATE  Observe the true context $x_t$ and the reward $y_t$
		\ENDFOR
	\end{algorithmic}
\end{algorithm}

\subsection{Proof of Lemma \ref{lma:KernelConcent}}\label{proof:concent}
\textbf{Lemma \ref{lma:KernelConcent}}\textit{
	Assume that the reward function $f(x,a)$ satisfies $\left| f(x,a)\right|\leq B$, the noise $n_t$ satisfies a sub-Gaussian distribution with parameter $b$,
	and  kernel ridge regression is used to estimate the reward function. With a probability of at least $1-\delta, \delta\in (0,1)$, for all $a\in \mathcal{A}$ and $t\in \mathbb{N}$, the estimation error satisfies
	$
	|\hat{f}_{t}(x,a)-f(x,a)|\leq h_ts_{t}(x,a),
	$
	where $h_t=\sqrt{\lambda}B+b\sqrt{\gamma_{t}-2\log\left( \delta\right) }$,   $\gamma_{t}=\log \det(\mathbf{I}+\mathbf{K}_{t}/\lambda)\leq \bar{d}\log (1+\frac{t}{\bar{d}\lambda} )$ and $\bar{d}$ is the rank of $\mathbf{K}_{t}$. Let $\mathbf{V}_t=\lambda \mathbf{I}+\sum_{s=1}^{t}\phi(x,a)\phi(x,a)^\top$, the squared confidence width is given by $ s^2_{t}(x,a)=\phi(x,a)^\top\mathbf{V}^{-1}_{t-1}\phi(x,a)=\frac{1}{\lambda}k([x,a],[x,a])-\frac{1}{\lambda}\mathbf{k}_{t}(x,a)^T(\mathbf{K}_{t}+\lambda \mathbf{I})^{-1}\mathbf{k}_{t}(x,a)$.
}
\begin{proof}
	Let $\phi: \left( \mathcal{X}\times \mathcal{A}\right)\rightarrow \mathcal{H}$ be the mapping function with respect to $k(\cdot,\cdot)$. Define $\Psi_{t}$ with the $s$th row as $\phi(x_{s},a_{s}), s=1,2,\cdots, t-1$ and thus $\mathbf{K}_{t}=\Psi_{t}\Psi_{t}^\top$.  Denote  $\mathbf{y}_t=[y_1,\cdots,y_{t-1}]^\top$ as the collected rewards and the noise vector $\mathbf{n}_t=[n_1,n_2\cdots,n_{t-1}]^\top$, so $\mathbf{y}_{t}=\Psi_{t}\theta+\mathbf{n}_t$. Then we have
	\begin{equation}\label{equ:estierror}
		\begin{split}
			|f(x,a)&-\hat{f}_{t}(x,a)| =|f(x,a)-\mathbf{k}_{t}^\top(x,a)(\mathbf{K}_{t}+\lambda \mathbf{I}_t)^{-1}\mathbf{y}_{t}|\\
			&=|f(x,a)-\mathbf{k}_{t}^\top(x,a)(\mathbf{K}_{t}+\lambda \mathbf{I}_t)^{-1}(\Psi_{t}\theta+\mathbf{n}_{t})|\\
			&\leq |f(x,a)-\mathbf{k}_{t}^\top(x,a)(\mathbf{K}_{t}+\lambda \mathbf{I}_t)^{-1}\Psi_{t}\theta|+\\
			& \qquad |\mathbf{k}_{t}^\top(x,a)(\mathbf{K}_{t}+\lambda \mathbf{I}_t)^{-1}\mathbf{n}_{t}|.
		\end{split}
	\end{equation}
	where $\mathbf{I}_t$ is an identity matrix with $(t-1)$ dimensions and the last inequality comes from triangle inequality.
	
	Let $\mathbf{V}_{t-1}=\Psi_{t}^\top\Psi_{t}+\lambda\mathbf{I}$ and define $s_{t}^2\left(x,a \right) =\phi(x,a)^\top\mathbf{V}_{t-1}^{-1}\phi(x,a)$. By Woodbury formula, we can write $s_{t}^2\left(x,a \right)$ by kernel functions:
	\begin{equation}
		\begin{split}
			s_{t}^2&\left(x,a \right)\! =\!\phi(x,a)^\top\!\!\left( \frac{1}{\lambda}\mathbf{I}\!-\!\frac{1}{\lambda}\Psi_t^\top\!\!\left( \lambda\mathbf{I}_t+\Psi_t\Psi_t^\top\right) ^{-1}\!\!\Psi_t\right)\!\! \phi(x,a)\\
			&=\frac{1}{\lambda}k([x,a],[x,a])-\frac{1}{\lambda}\mathbf{k}_{t}(x,a)^\top(\mathbf{K}_{t}+\lambda\mathbf{I}_t)^{-1}\mathbf{k}_{t}(x,a).
		\end{split}
	\end{equation}
	For the first term of Eqn.~\eqref{equ:estierror}, using Woodbury formula, we have
	\begin{equation}
		\begin{split}
			|f(x,a)&-\mathbf{k}_{t}^\top(x,a)(\mathbf{K}_{t}+\lambda \mathbf{I}_t)^{-1}\Psi_{t}\theta|\\
			&=|\phi(x,a)^\top\theta-\phi(x,a)^\top\Psi_{t}^\top(\Psi_{t}\Psi_{t}^\top+\lambda\mathbf{I}_t)^{-1}\Psi_{t}\theta|\\
			&=|\phi(x,a)^\top\theta-\phi(x,a)^\top(\Psi_{t}^\top\Psi_{t}+\lambda\mathbf{I})^{-1}\Psi_{t}^\top\Psi_{t}\theta|\\
			&=|\lambda\phi(x,a)^\top(\Psi_{t}^\top\Psi_{t}+\lambda\mathbf{I})^{-1}\theta|\leq \sqrt{\lambda}Bs_{t}(x,a),
		\end{split}
	\end{equation}
	where the inequality comes from Cauchy-Schwartz inequality and $\|\theta\|_{\mathbf{V}_{t-1}^{-1}}\leq \|\theta\|_2\mathrm{eig}_{\mathrm{max}}(\mathbf{V}_{t-1})\leq \frac{B}{\sqrt{\lambda}}$.
	
	For the second term, we have the following inequalities.
	\begin{equation}
		\begin{split}
			|\mathbf{k}_{t}^\top&(x,a)(\mathbf{K}_{t}+\lambda \mathbf{I}_t)^{-1}\mathbf{n}_{t}|=\\
			&\quad |\phi(x,a)^\top(\Psi_{t}^\top\Psi_{t}+\lambda\mathbf{I})^{-1}\Psi_{t}^\top\mathbf{n}_{t}|\\
			&\leq \|\phi(x,a)\|_{\mathbf{V}_{t-1}^{-1}}\sqrt{\mathbf{n}_{t}^\top\Psi_{t}(\Psi_{t}^\top\Psi_{t}+\lambda\mathbf{I})^{-1}\Psi_{t}^\top\mathbf{n}_{t}}\\
			& =s_{t}(x,a)\|\mathbf{n}_{t}\|_{\mathbf{K}_{t}\left(\mathbf{K}_{t}+\lambda\mathbf{I}_{t} \right)^{-1} },
		\end{split}
	\end{equation}
	where the first inequality comes from Cauchy-Schwartz inequality.
	
	Since $n_t$ satisfies $b$-sub-Gaussain distribution conditioned on $\mathcal{F}_{t-1}$, by Theorem 1 in \cite{yadkori2011}, with probability $1-\delta$, $\delta\in(0,1)$ for $t\in\mathbb{N}$ and $a\in \mathcal{A}$, we have
	\begin{equation}
		\begin{split}
			&\|\mathbf{n}_{t}\|^2_{\mathbf{K}_{t}\left(\mathbf{K}_{t}+\lambda\mathbf{I}_{t} \right)^{-1} }=	\|\Psi_t^T \mathbf{n}_t\|^2_{V_{t-1}^{-1}}\\
			&\leq 2b^2\log\left( \frac{\det(V_{t-1})^{1/2}}{\delta \det(\lambda\mathbf{I}_{d})^{1/2}}\right)=b^2\left( \gamma_{t}-2\log(\delta)\right) .
		\end{split}
	\end{equation}
	where $\gamma_{t}=\log\det\left( \mathbf{K}_{t}/\lambda+\mathbf{I}_t\right)  \leq \bar{d}\log (1+\frac{t}{\bar{d}\lambda} )$ where $\bar{d}$ is the dimension of $\mathbf{K}_t$ the inequality comes from Lemma 10 in \cite{yadkori2011}.
	
	Combining the bounds of the first and second term in Eqn.~\eqref{equ:estierror} and let $h_t=\sqrt{\lambda}B+b\sqrt{\gamma_t-2\log(\delta)}$, we obtain the concentration bound in Lemma 3.1:
	$|f(x,a)-\hat{f}_{t}(x,a)|\leq h_ts_t\left(x,a \right) $.
\end{proof}

\subsection{Proof of Lemma \ref{lma:sumvarbatch}}\label{proof:width}

\textbf{Lemma \ref{lma:sumvarbatch}}\textit{
	The confidence widths given 
	$x_t$ for $t\in\{1,\cdots, T\}$ satisfies $\sum_{t=1}^Ts^2_{t}(x_t,a_t)\!\leq 2\gamma_{T}$, where $\gamma_{T}=\log \det(\mathbf{I}+\mathbf{K}_{T}/\lambda)\leq \bar{d}\log (1+\frac{T}{\bar{d}\lambda} )$
	and $\bar{d}$ is the rank of $\mathbf{K}_{T}$, and so $\sum_{t=1}^Ts_{t}(x_t,a_t)\!\leq\sqrt{2T\gamma_{T}}$.
}
\begin{proof}
	By Lemma 11 in \cite{yadkori2011}, if $\lambda\geq 1$, we have
	\begin{equation}
		\begin{split}
			\sum_{t=1}^Ts^2_t(x_t,a_t)&=\sum_{t=1}^T\phi(x_t,a_t)^\top\mathbf{V}^{-1}_{t-1}\phi(x_t,a_t)\\
			&\leq 2\log\det\left( \frac{1}{\lambda}\mathbf{V}_T\right) =2\gamma_{T}
		\end{split}
	\end{equation}	
	where $\gamma_{T}=\log\det\left( \mathbf{K}_{T}/\lambda+\mathbf{I}_T\right)\leq \bar{d}\log (1+\frac{T}{\bar{d}\lambda} )$.
	By H\"older's inequality, we have
	\begin{equation}
		\sum_{t=1}^Ts_t(x_t,a_t)\leq\sqrt{T\sum_{t=1}^Ts^2_t(x_t,a_t)}=\sqrt{2T\gamma_T}.
	\end{equation}	
\end{proof}

\section{Proofs  Related to  \ouralgtwo}
\subsection{Proof of Theorem \ref{the:rewardmmucb}}
\textbf{Theorem \ref{the:rewardmmucb}}\textit{
	If \ouralgtwo is used to select arms with imperfect context, then for any true contexts $x_t\in \mathcal{B}_{\Delta}(\hat{x}_t)$ at round $t, t=1,\cdots, T$, with a probability of $1-\delta,\delta\in (0,1)$, we have the following lower bound on the worst-case cumulative reward
	\begin{equation}
		F_T\geq \sum_{t=1}^T MF_t-2h_T\sqrt{2T\bar{d}\log (1+\frac{T}{\bar{d}\lambda} )}
	\end{equation}
	where $MF_t$ is the optimal worst-case reward in Eqn.~\eqref{eqn:worstreward}, $\bar{d}$ is the rank of $\mathbf{K}_t$ and $h_T$ is given in Lemma \ref{lma:KernelConcent}.	 	}
\begin{proof}
	By Lemma \ref{lma:KernelConcent} and UCB defined in Definition \ref{def:defucb},
	with probability $\delta$, $\delta\in(0,1)$, the instantaneous regret can be bounded as
	\begin{equation}\label{equ:minucbbound}
		\begin{split}
			f(x_{t},a^{\textrm{I}}_t) \geq U_t\left(x_t,a^{\textrm{I}}_t \right)-2h_ts_t\left(x_t,a^{\textrm{I}}_t \right) \\
			\geq U_t\left(x^{\textrm{I}}_{t},a^{\textrm{I}}_t \right)-2h_ts_t\left(x_t,a^{\textrm{I}}_t \right)
		\end{split}
	\end{equation}
	where the last inequality comes from the arm selection policy of \ouralgtwo
	implying that  $x^{\textrm{I}}_{t}=\arg\min_{\mathbf{x} \in \mathcal{B}_{\Delta}(\hat{x}_t)}U_t\left(x,a^{\textrm{I}}_t \right)$.

	With the definition of the optimal worst-case reward $MF_t$ and exploiting the arm selection policy of \ouralgtwo, we can further bound the instantaneous reward as below.
	\begin{equation}\label{equ:rewardlowerbound}
		\begin{split}
			f(x_{t},&a^{\textrm{I}}_t)
			\geq \left[ U_t\left(x^{\textrm{I}}_{t},a^{\textrm{I}}_t \right)-2h_ts_t\left(x_t,a^{\textrm{I}}_t \right)\right] \\
			&\geq \left[ U_t\left(\bar{x}^{\textrm{I}}_{t},\bar{a}_t \right)-2h_ts_t\left(x_t,a^{\textrm{I}}_t \right)\right] \\
			&\geq \left[ U_t\left(\bar{x}^{\textrm{I}}_{t},\bar{a}_t \right)-2h_ts_t\left(x_t,a^{\textrm{I}}_t \right)\right]  -f\left(\bar{x}^{\textrm{I}}_{t},\bar{a}_t \right)+MF_t\\
			&\geq MF_t-2h_ts_t\left(x_t,a^{\mathrm{I}}_t  \right)
		\end{split}
	\end{equation}
	where $\bar{a}_t=\arg\max_{a\in\mathcal{A}}\min_{x\in\mathcal{B}_{\Delta}(\hat{x}_t)}f(x,a)$ is the optimal arm for maximizing the worst-case reward defined in Definition 1, $\bar{x}^{\textrm{I}}_{t}=\arg\min_{x \in \mathcal{B}_{\Delta}(\hat{x}_t)}U_t\left(x,\bar{a}_t\right)$,  the second inequality comes from the arm selection strategy
	of \ouralgtwo such that $\min_{x \in \mathcal{B}_{\Delta}(\hat{x}_t)}U_t\left(x,a^{\textrm{I}}_t\right)\geq \min_{x \in \mathcal{B}_{\Delta}(\hat{x}_t)}U_t\left(x,\bar{a}_t\right)$, the third inequality holds because the definition of $MF_t$ in Eqn. \eqref{eqn:worstreward}.
	which guarantees $MF_t=\min_{x\in\mathcal{B}_{\Delta}(\hat{x}_t)}f(x,\bar{a}_t)\leq f\left(\bar{x}^{\textrm{I}}_{t},\bar{a}_t \right)$, and the last inequality comes from Lemma \ref{lma:KernelConcent} 
	which guarantees $U_t\left(\bar{x}^{\textrm{I}}_{t},\bar{a}_t \right)\geq f\left(\bar{x}^{\textrm{I}}_{t},\bar{a}_t \right)$.
	Therefore, combined with Lemma~\ref{lma:sumvarbatch},
	we can get the bound the cumulative reward of \ouralgtwo.
\end{proof}
\subsection{Proof of Corollary \ref{the:regretmmucb}}
\textbf{Corollary \ref{the:regretmmucb}}\textit{
	If \ouralgtwo is used to select arms with imperfect context, then for any true contexts $x_t\in  \mathcal{B}_{\Delta}(\hat{x}_t)$ at round $t, t=1,\cdots, T$, with a probability of $1-\delta,\delta\in (0,1)$, we have the following bound on the cumulative true regret defined in Eqn.~\eqref{eqn:trueregret}:
	\begin{equation}
		\begin{split}
			R_T \!\leq\! \sum_{t=1}^T\overline{MR}_t +2h_T\sqrt{2T\bar{d}\log (1+\frac{T}{\bar{d}\lambda} )}
		\end{split}
	\end{equation}
	where $\overline{MR}_t=\max_{x \in \mathcal{B}_{\Delta}(\hat{x}_t)} f\left(x,A^*\left( x \right)  \right) -MF_t$, $MF_t$ is the optimal worst-case reward in Eqn.~\eqref{eqn:worstreward}, $\bar{d}$ is the rank of $\mathbf{K}_t$ and $h_T$ is given in Lemma \ref{lma:KernelConcent} .}
\begin{proof}
	For any $x_t\in \mathcal{B}_{\Delta}(\hat{x}_t)$, by Theorem \ref{the:rewardmmucb}, with a probability of $1-\delta,\delta\in (0,1)$, the instantaneous regret can be bounded as
	\begin{equation}\label{eqn:boundregretmmucb}
		\begin{split}
			r(x_t,& a^{\textrm{I}}_t ) =f\left(x_t, A^*(x_t) \right) -f\left(x_t, a^{\textrm{I}}_t \right) \\
			&\leq f\left(x_t, A^*(x_t) \right) -(MF_t-2h_ts_t\left(x_t,a^{\mathrm{I}}_t  \right))\\
			&\leq \max_{x\in\mathcal{B}_{\Delta}(\hat{x}_t)}\left\lbrace f\left(x, A^*(x) \right) \right\rbrace-MF_t+2h_ts_t\left(x_t,a^{\mathrm{I}}_t \right).
		\end{split}
	\end{equation}
	Corollary  \ref{the:regretmmucb} can be obtained by taking summation with $t$ and Lemma \ref{lma:sumvarbatch}.
\end{proof} 

\section{Proofs  Related to   \ouralgthree}

\subsection{Proof of Lemma \ref{lma:boudnsubopt}}
\textbf{Lemma \ref{lma:boudnsubopt}}
\textit{
	If \ouralgthree is used to select arms with imperfect context, then for each $t=1,2,\cdots, T$, with a probability at least $1-\delta,\delta\in (0,1)$, we have
	\begin{equation}
		\overline{D}_{a^{\mathrm{II}}_t,t}\leq MR_t+2h_ts_t\left(\dot{x}_t,  A^{\dagger}_t\left(\dot{x}_t \right) \right),
	\end{equation}
	where $MR_t$ is the optimal worst-case regret defined in Eqn.~\eqref{eqn:optworstreg}, $\dot{x}_t=\arg\max_{x\in\mathcal{B}_{\Delta}(\hat{x}_t)}D_{\tilde{a}_t}\left(x \right)$ is the context that maximizes the degradation given the arm $\tilde{a}_t$ defined for the optimal worst-case regret in Eqn.~\eqref{eqn:solutionmmreg}.
}
\begin{proof}
	Recall that in Eqn.~\eqref{eqn:minmaxregret}, the optimal arm for minimizing the worst-case regret is $\tilde{a}_t=\arg\min_{a\in\mathcal{A}}\max_{x\in\mathcal{B}_{\Delta}(\hat{x}_t)}r(x,a)$. By the arm selection policy of \ouralgthree, $a_t^{\textrm{II}}$ is th arm that minimizes $\overline{D}_{a,t}$, so the worst-case degradation can be bounded as follows.
	\begin{equation}
		\begin{split}
			\overline{D}_{a_t^{\textrm{II}},t}&\leq \overline{D}_{\tilde{a}_t,t}=\max_{x \in \mathcal{B}_{\Delta}(\hat{x}_t)}\left\lbrace U_t\left(x, A^{\dagger}_t\left( x\right)   \right) - U_t\left(x,\tilde{a}_t \right) \right\rbrace\\
			&=U_t\left(\dot{x}_{t},A^{\dagger}_t\left( \dot{x}_t\right)\right)- U_t\left(\dot{x}_{t} ,\tilde{a}_t  \right) \\
			&\leq U_t\left(\dot{x}_{t} ,A^{\dagger}_t\left( \dot{x}_t\right)\right)- U_t\left(\dot{x}_{t} ,\tilde{a}_t  \right)-\\
			&\qquad \left[ f\left(\dot{x}_{t} ,A^{\dagger}_t\left( \dot{x}_t\right)\right)-
			f\left(\dot{x}_{t} ,\tilde{a}_t\right)\right]+MR_t\\
			&\leq  2h_ts_t\left(\dot{x}_{t} ,A^{\dagger}_t\left( \dot{x}_t\right)\right) +MR_t,
		\end{split}
	\end{equation}
	where $\dot{x}_t=\arg\max_{x\in\mathcal{B}_{\Delta}(\hat{x}_t)}D_{\tilde{a}_t,t}(x)$,
	the second inequality holds because
	$\left[ f\left(\dot{x}_{t} ,A^{\dagger}_t\left( \dot{x}_t\right)\right)-
	f\left(\dot{x}_{t} ,\tilde{a}_t\right)\right]\leq\left[ f\left(\dot{x}_{t} ,A^*\left( \dot{x}_t\right)\right)-
	f\left(\dot{x}_{t},\tilde{a}_t\right)\right]=r(\dot{x}_t,\tilde{a}_t)\leq MR_t$
	where $MR_t=\max_{x\in\mathcal{B}_{\Delta}(\hat{x}_t)}r\left(x,\tilde{a}_t \right) $ is the optimal worst-case regret
	defined in Eqn.~\eqref{eqn:optworstreg}
	and the third inequality is from Lemma \ref{lma:KernelConcent}
	which guarantees that $U_t\left(\dot{x}_{t} ,A^{\dagger}_t\left( \dot{x}_t\right) \right)-f\left(\dot{x}_{t} ,A^{\dagger}_t\left( \dot{x}_t\right)\right)\leq 2h_ts_t\left(\dot{x}_{t} ,A^{\dagger}_t\left( \dot{x}_t\right)\right)$ and $f\left(\dot{x}_{t} ,\tilde{a}_t \right)-U_t\left(\dot{x}_{t} ,\tilde{a}_t \right)\leq 0$.
\end{proof}
\subsection{Proof of Proposition 5}
\label{sec:proofsumofconfidenceconsidered}
We first bound the confidence width sum of considered context-arm sequence in the Lemma \ref{lma:sumconfwdiscdiscrete} assuming a linear reward function, i.e. $\phi(x,a)=[x,a]$, and  context-arm space $\Phi=\mathcal{X}\times\mathcal{A}$ is finite with size $|\Phi|$. Then in Lemma \ref{lma:sumconfwdisccont}, we prove the bound for linear reward case and continuous context-arm space. Finally we generalize the bound to the kernel cases. For conciseness, denote $x_{a_t,t}=[x_t,a_t]\in \mathcal{R}^d$ as the true context and selected arm at round $t$ and $\dot{x}_{\dot{a}_t,t}=[\dot{x}_t,\dot{a}_t]\in \mathcal{R}^d$ as the considered context and arm at round $t$.

\begin{lemma}[Sum of Confidence Width with Finite Context-arm Space]\label{lma:sumconfwdiscdiscrete}
	Let $\mathcal{X}_T=\left\lbrace x_{a_1,1},\cdots,  x_{a_T,T}\right\rbrace$ be the sequence of true contexts and selected arms by bandit algorithms and $\dot{\mathcal{X}}_T=\left\lbrace  \dot{x}_{\dot{a}_1,1},\cdots,  \dot{x}_{\dot{a}_T,T} \right\rbrace $ be the considered sequence of contexts and arms. Suppose that both $x_{a_t,t}$  and $\dot{x}_{\dot{a}_t,t}$$\left( t=1,\cdots,T\right) $ belong to a finite set $\Phi$ with size $|\Phi|$. Besides, $\forall  \varphi\in \Phi$, $\exists \kappa\geq 0$,  two conditions are satisfied. First, $\exists t\leq \left \lceil  \kappa|\Phi|\right \rceil$ such that $x_{a_t,t}=\varphi$. Second, if at round $t$, $x_{a_t,t}=\varphi$, then $\exists t', t\leq t'\leq t+\left \lceil  \kappa|\Phi|\right \rceil$, such that $x_{a_{t'},t'}=\varphi$. The sum of confidence width is bounded as	
	\begin{equation}
		\sum_{t=1}^T s_{t}^2\left(\dot{x}_{\dot{a}_t,t} \right)\leq \left \lceil  \kappa|\Phi|\right \rceil\left( 2d\log\left(1+\frac{T}{d\lambda} \right)+\frac{1}{\lambda}	\right),
	\end{equation}
	where  $s_{t}^2\left(\dot{x}_{\dot{a}_t,t} \right)=\dot{x}^{\top}_{\dot{a}_t,t}\mathbf{V}^{-1}_{t-1}\dot{x}_{\dot{a}_t,t}$ and $\mathbf{V}_t=\lambda\mathbf{I}+\sum_{s=1}^tx_{a_s,s}x_{a_s,s}^{\top}$.
\end{lemma}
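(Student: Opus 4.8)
The plan is to compare each confidence width evaluated on the considered sequence against a confidence width evaluated on the \emph{true} sequence at a nearby round, and then to invoke Lemma~\ref{lma:sumvarbatch} for the true sequence. Write $N=\lceil\kappa|\Phi|\rceil$ and observe that $\mathbf{V}_t=\lambda\mathbf{I}+\sum_{s=1}^t x_{a_s,s}x_{a_s,s}^{\top}$ is built \emph{only} from the true sequence, so $\mathbf{V}_{t-1}\succeq\mathbf{V}_{\tau-1}$ whenever $t\geq\tau$, and hence $\mathbf{V}_{t-1}^{-1}\preceq\mathbf{V}_{\tau-1}^{-1}$. Consequently, for any fixed feature $\varphi$ and any $\tau\leq t$ we have $\varphi^{\top}\mathbf{V}_{t-1}^{-1}\varphi\leq\varphi^{\top}\mathbf{V}_{\tau-1}^{-1}\varphi$; this monotonicity is the engine of the argument, and it is essential that we match $t$ to a \emph{past} true visit, since a future visit would give the inequality in the wrong direction.

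Next I would classify the rounds. Call a round $t$ \emph{covered} if the element $\varphi_t:=\dot{x}_{\dot{a}_t,t}\in\Phi$ has already appeared in the true sequence at some round $\tau\leq t$, and \emph{initial} otherwise. For a covered round, let $\tau(t)$ denote the most recent true visit of $\varphi_t$ with $\tau(t)\leq t$, so that $x_{a_{\tau(t)},\tau(t)}=\varphi_t$; the monotonicity above then yields $s_t^2(\dot{x}_{\dot{a}_t,t})=\varphi_t^{\top}\mathbf{V}_{t-1}^{-1}\varphi_t\leq\varphi_t^{\top}\mathbf{V}_{\tau(t)-1}^{-1}\varphi_t=s_{\tau(t)}^2(x_{a_{\tau(t)},\tau(t)})$. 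The crucial counting step is to bound how many covered rounds can share the same image $\tau(t)=\tau$: by the second condition, consecutive true visits of any $\varphi$ are at most $N$ apart, so if $\tau$ and $\tau'$ are consecutive true visits of $\varphi_t$, then $\tau(t)=\tau$ forces $t\in[\tau,\tau'-1]$, an interval of length at most $N$. Hence each true round $\tau$ is the image of at most $N$ covered rounds, and summing gives $\sum_{t\text{ covered}}s_t^2(\dot{x}_{\dot{a}_t,t})\leq N\sum_{\tau=1}^T s_\tau^2(x_{a_\tau,\tau})\leq 2N\gamma_T\leq N\cdot 2d\log(1+\tfrac{T}{d\lambda})$, the last two steps being Lemma~\ref{lma:sumvarbatch} applied to the true sequence together with $\gamma_T\leq d\log(1+\tfrac{T}{d\lambda})$.

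It remains to treat the initial rounds. For such a round, $\mathbf{V}_{t-1}\succeq\lambda\mathbf{I}$ gives the crude bound $s_t^2(\dot{x}_{\dot{a}_t,t})\leq\|\varphi_t\|^2/\lambda\leq 1/\lambda$ under the standard normalization $\|\varphi\|\leq 1$. The first condition forces every $\varphi\in\Phi$ to appear in the true sequence within the first $N$ rounds, so any initial round (one occurring strictly before the first true visit of its feature) must satisfy $t<N$; therefore there are at most $N-1$ initial rounds, contributing at most $N/\lambda$ in total. Adding the two contributions yields the claimed bound $\sum_{t=1}^T s_t^2(\dot{x}_{\dot{a}_t,t})\leq N\bigl(2d\log(1+\tfrac{T}{d\lambda})+\tfrac{1}{\lambda}\bigr)$.

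The step I expect to be the main obstacle is the counting for covered rounds: one must argue, purely from the two covering conditions, that each true confidence width $s_\tau^2(x_{a_\tau,\tau})$ is charged at most $N$ times, while \emph{simultaneously} confining the initial rounds to the prefix $\{1,\dots,N-1\}$ so that their crude $1/\lambda$ bound does not accumulate beyond $N/\lambda$. Getting both the multiplicity bound and the prefix confinement to come out with the \emph{same} factor $N=\lceil\kappa|\Phi|\rceil$ is what makes the clean form of the bound work, and it hinges on always using the \emph{most recent} past true visit rather than an arbitrary matching.
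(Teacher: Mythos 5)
Your proof is correct, and it takes a genuinely different route from the paper's. The paper partitions the rounds $1,\dots,M\lceil\kappa|\Phi|\rceil$ into $N=\lceil\kappa|\Phi|\rceil$ interleaved groups $\mathcal{G}_l=\{l,\,l+N,\,l+2N,\dots\}$, builds a group-restricted design matrix $\mathbf{W}_{l,m}$ from the \emph{considered} features, uses the two conditions to show $\mathbf{V}_{t-1}\succeq\mathbf{W}_{l,m_t-1}$ (every considered feature in the group has a true visit before the next group element), and applies the elliptical potential lemma separately inside each of the $N$ groups, with the leftover rounds $t>MN$ absorbed by the crude $1/\lambda$ bound. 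You instead dominate each considered width pointwise by a \emph{true} width at an earlier round, $s_t^2\left(\dot{x}_{\dot a_t,t}\right)\leq s_{\tau(t)}^2\left(x_{a_{\tau(t)},\tau(t)}\right)$ via the monotonicity $\mathbf{V}_{t-1}\succeq\mathbf{V}_{\tau(t)-1}$, and then count multiplicity: condition 2 confines all preimages of a true round $\tau$ to the interval between $\tau$ and the next true visit of the same feature, hence at most $N$ of them, while condition 1 confines the unmatched (``initial'') rounds to $\{1,\dots,N-1\}$, each paying $1/\lambda$. This needs only one application of Lemma~\ref{lma:sumvarbatch}, to the true sequence, and your observation that the matching must use the most recent \emph{past} visit (not an arbitrary or future one) is precisely what makes both the direction of the PSD inequality and the multiplicity-$N$ count work; both arguments land on the identical constant $\lceil\kappa|\Phi|\rceil\left(2d\log\left(1+\frac{T}{d\lambda}\right)+\frac{1}{\lambda}\right)$. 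You also share the paper's two implicit readings: condition 2 must be interpreted as ``the next, strictly later, true visit occurs within $N$ rounds'' (taking $t'=t$ literally would make it vacuous), and the crude bound $s_t^2\leq 1/\lambda$ assumes the normalization $\|\varphi\|\leq 1$, which the paper uses without stating. What the paper's heavier group construction buys is that it transfers almost verbatim to the continuous case (Lemma~\ref{lma:sumconfwdisccont} and Proposition~\ref{prop:sumconfwd}), where the considered feature only lies in the same $\epsilon$-cell as a true feature and the within-group matrices built from mapped true visits absorb the $O(\epsilon^2)$ mismatch; your pointwise charging scheme would need an extra Cauchy--Schwarz splitting there, but for the finite lemma at hand it is the more elementary argument.
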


\begin{figure}[!t]	
	
	\centering
	\includegraphics[width={0.48\textwidth}]{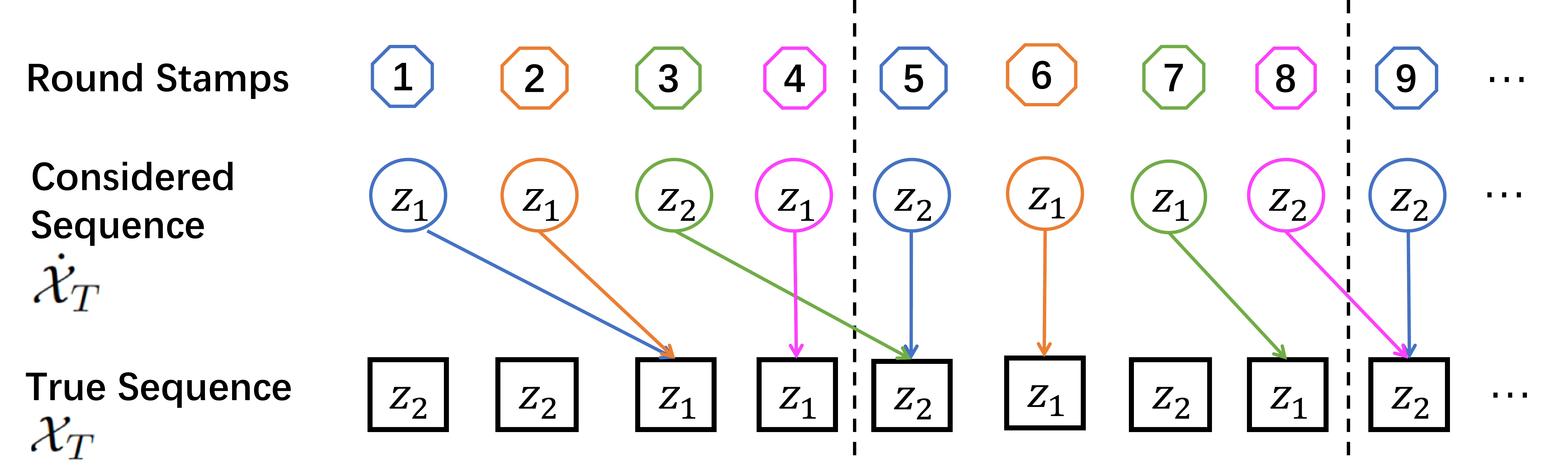}
	
	\caption{An example of sequence grouping in Lemma \ref{lma:sumconfwdiscdiscrete}. The feature set is $\Phi=\{z_1,z_2\}$. $\kappa=2$. $\left\lceil\kappa |\Phi|\right\rceil=4$ groups are constructed: $\mathcal{G}_1=\{1,5,9,\cdots\}, \mathcal{G}_2=\{2,6,\cdots\}, \mathcal{G}_3=\{3,7,\cdots\}, \mathcal{G}_4=\{4,8,\cdots\}$. The interval between the features in considered sequence and their counterparts in true sequence is less than $\left\lceil\kappa |\Phi|\right\rceil=4$.}
	\label{fig:sequence}
	\vspace{-0.4cm}	
\end{figure}

\begin{proof}
	
	Let $M= \left \lfloor  T/ \left \lceil  \kappa|\Phi|\right \rceil\right \rfloor$. Divide the round stamp sequence $\left\lbrace1, 2, \cdots, M\left \lceil  \kappa|\Phi|\right \rceil \right\rbrace $ uniformly into $\left \lceil  \kappa|\Phi|\right \rceil$ groups, each with $M$ elements. The $l$th group of round stamps, $\left( 1\leq l\leq \left \lceil  \kappa|\Phi|\right \rceil\right) $, is $\mathcal{G}_l=\left\lbrace l, l+\left \lceil  \kappa|\Phi|\right \rceil,\cdots,l+\left(M-1 \right)\left \lceil  \kappa|\Phi|\right \rceil \right\rbrace $, and $\mathcal{G}_{l,m}=\left\lbrace l, l+\left \lceil  \kappa|\Phi|\right \rceil,\cdots,l+(m-1)\left \lceil  \kappa|\Phi|\right \rceil\right\rbrace, (1\leq m\leq M)$  is a subset of  $\mathcal{G}_l$ with the first $m$ entries. A simple example of group construction is shown in Fig. \ref{fig:sequence}.
	
	Let $\mathbf{W}_{l,m}=\lambda\mathbf{I}_d+\sum_{s\in \mathcal{G}_{l,m}}\dot{x}_{\dot{a}_s,s}\dot{x}_{\dot{a}_s,s}^{\top}$, for $1\leq l\leq \left \lceil  \kappa|\Phi|\right \rceil$. For the sequence $\left\lbrace \dot{x}_{\dot{a}_t,t}  \mid t\in \mathcal{G}_l \right\rbrace$, by Lemma \ref{lma:sumvarbatch},
	we have $\sum_{t\in \mathcal{G}_l}\dot{x}^{\top}_{\dot{a}_t,t}\mathbf{W}^{-1}_{l,m_t-1}\dot{x}_{\dot{a}_t,t}\leq \gamma(\mathbf{W}_{l,M})$ where $m_t=\left \lfloor(t-1)/\left \lceil  \kappa|\Phi|\right \rceil\right \rfloor$ and $\gamma(\mathbf{W}_{l,M})=\log\frac{\det(\mathbf{W}_{l,M})}{\det(\lambda \mathbf{I})}$.
	
	Let's consider the sequence in $\mathcal{G}_l$. The two conditions in Lemma \ref{lma:sumconfwdisccont}, imply that for $t\in \mathcal{G}_{l}$ and its index $m_t=\left \lfloor(t-1)/\left \lceil  \kappa|\Phi|\right \rceil\right \rfloor$, $\forall s \in \mathcal{G}_{l,m_t-1}$, if $\dot{x}_{\dot{a}_s,s}=\varphi$, there exists $s'(s\leq s'< s+\left \lceil  \kappa|\Phi|\right \rceil)$ such that the true context-arm $x_{a_{s'},s'}=\varphi$ (For example, in Fig. \ref{fig:sequence}, $s=1$ is in $\mathcal{G}_{1}$, and there exists $s'=3\leq (4-1)$ such that $\dot{x}_{\dot{a}_s,s}=x_{a_s',s'}=z_1$). Therefore we have $s'\in \mathcal{T}_{t-1}=\left\lbrace 1,2,\cdots, t-1\right\rbrace $, and we conclude that $\left\lbrace \dot{x}_{\dot{a}_s,s}, s\in\mathcal{G}_{l,m_t-1} \right\rbrace \subseteq \left\lbrace x_{a_s,s}, s\in\mathcal{T}_{t-1} \right\rbrace$. Thus, considering $\mathbf{V}_{t-1}$ and $\mathbf{W}_{l,m_t-1}$ are both positive definite matrices, we have  $\mathbf{V}_{t-1}\succeq\mathbf{W}_{l,m_t-1}$  and $\mathbf{V}^{-1}_{m_t-1}\preceq \mathbf{W}^{-1}_{l,m_t-1}$.
	Therefore, for group $l$ ($1\leq l\leq \left \lceil  \kappa|\Phi|\right \rceil$), we have
	\begin{equation}
		\begin{split}
			\sum_{t\in \mathcal{G}_l}s^2_{t}\left(\dot{x}_{\dot{a}_t,t} \right) &=\sum_{t\in \mathcal{G}_l}\dot{x}^{\top}_{\dot{a}_t,t}\mathbf{V}^{-1}_{t-1}\dot{x}_{\dot{a}_t,t}\\
			&\leq \sum_{t\in \mathcal{G}_l}\dot{x}^{\top}_{\dot{a}_t,t}\mathbf{W}_{l,M}\dot{x}_{\dot{a}_t,t}\leq \gamma(\mathbf{W}_{l,M}).
		\end{split}
	\end{equation}
	
	From the above analysis, since $s^2_{t}\left(\dot{x}_{\dot{a}_t,t}\right) \leq \frac{1}{\lambda}$, we can get
	\begin{equation}\label{equ:sumvardiscrete}
		\begin{split}
			\sum_{t=1}^T s^2_{t}\left(\dot{x}_{\dot{a}_t,t} \right)& = \sum_{l=1}^{\left \lceil  \kappa|\Phi|\right \rceil} \sum_{t\in \mathcal{G}_l}s^2_{t}\left(\dot{x}_{\dot{a}_t,t} \right)+ \sum_{t=M\left \lceil  \kappa|\Phi|\right \rceil+1}^{T} s^2_{t}\left(\dot{x}_{\dot{a}_t,t} \right)\\
			&\leq \sum_{l=1}^{\left \lceil  \kappa|\Phi|\right \rceil} \gamma(\mathbf{W}_{l,M})+ \left \lceil  \kappa|\Phi|\right \rceil\frac{1}{\lambda}\\
			&\leq \left \lceil  \kappa|\Phi|\right \rceil\left( 2d\log\left(1+\frac{M}{d\lambda} \right)+\frac{1}{\lambda}	\right)\\
			&\leq \left \lceil  \kappa|\Phi|\right \rceil\left( 2d\log\left(1+\frac{T}{d\lambda} \right)+\frac{1}{\lambda}	\right),
		\end{split}
	\end{equation}
	where the second inequality comes from Lemma 11 in \cite{yadkori2011}.
	thus completing the proof.
\end{proof}

If the context-arm space is continuous, the finite set assumption of $\Phi$ in Lemma \ref{lma:sumconfwdiscdiscrete} is not satisfied anymore. To overcome this issue, we construct an $\epsilon$- covering $\Phi_{\epsilon}\in \Phi $ for the context-arm space $\Phi$ such that for each $\varphi\in\Phi$, there exists at least one $\bar{\varphi}\in\Phi_{\epsilon}$ satisfying $\|\varphi-\bar{\varphi} \|_2\leq \epsilon$. Since the dimension of $x_{a_t,t}$ is $d$, the size of the $\epsilon$- covering  is $|\Phi_{\epsilon}|\sim O\left( \frac{1}{\epsilon^d}\right) $. Now we can bound the sum of confidence width with linear reward function and continuous context-arm space in Lemma \ref{lma:sumconfwdisccont}.
\begin{lemma}[Sum of Confidence Width with Continuous Context-arm Space]\label{lma:sumconfwdisccont}
	
	Let $\mathcal{X}_T=\left\lbrace x_{a_1,1},\cdots,  x_{a_T,T}\right\rbrace$ be the sequence of true contexts and selected arms by bandit algorithms and $\dot{\mathcal{X}}_T=\left\lbrace  \dot{x}_{\dot{a}_1,1},\cdots,  \dot{x}_{\dot{a}_T,T} \right\rbrace $ be the considered sequence of contexts and actions. Suppose that both $x_{a_t,t}$  and $\dot{x}_{\dot{a}_t,t}$$\left( t=1,\cdots,T\right) $ belong to $\Phi$. Besides, with an $\epsilon-$ covering $\Phi_{\epsilon}\subseteq \Phi$, $\epsilon> 0$,  there exists $ \kappa \geq 0$ such that two conditions are satisfied for $\mathcal{X}_T$. First, $\forall \bar{\varphi}\in \Phi_{\epsilon}$, $\exists t\leq \left \lceil  \kappa/\epsilon^d\right \rceil$ such that $x_{a_t,t}\in \mathcal{C}_{\epsilon}\left(\bar{\varphi} \right) $. Second, if at round $t$, $x_{a_t,t}\in \mathcal{C}_{\epsilon}\left(\bar{\varphi} \right) $ for some $\bar{\varphi}\in \Phi_{\epsilon}$, then $\exists  t\leq t'< t+\left \lceil  \kappa/\epsilon^d\right \rceil$ such that $x_{a_t',t'}\in \mathcal{C}_{\epsilon}\left(\bar{\varphi} \right) $. The sum of squared confidence width is bounded as
	\begin{equation}
		\begin{split}
			\sum_{t=1}^T s^2_{t}\left(\dot{x}_{\dot{a}_t,t}\right)
			\!\!\leq\! \!\sqrt{T}\left(\!\! 4d\log\!\left(\!1+\frac{T}{d\lambda} \!\right)\!+\!\frac{1}{\lambda}\!\!	\right) \!+\!\frac{8\kappa^{2/d}}{\lambda}T^{1-1/d},
		\end{split}
	\end{equation}
	where $s_{t}^2\left(\dot{x}_{\dot{a}_t,t} \right)=\dot{x}^{\top}_{\dot{a}_t,t}\mathbf{V}^{-1}_{t-1}\dot{x}_{\dot{a}_t,t}$ and $\mathbf{V}_t=\lambda\mathbf{I}_d+\sum_{s=1}^tx_{a_s,s}x_{a_s,s}^{\top}$.
\end{lemma}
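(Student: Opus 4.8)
The plan is to reduce the continuous case to the finite case of Lemma~\ref{lma:sumconfwdiscdiscrete} by snapping each feature to its cell of the $\epsilon$-covering $\Phi_\epsilon$, and to absorb the discretization error through a second-order split $\|a+b\|^2\le 2\|a\|^2+2\|b\|^2$ rather than by comparing the true Gram matrix $\mathbf{V}_{t-1}$ against a center-based one. First I would note that the two hypotheses together force the true sequence to visit every cell $\mathcal{C}_\epsilon(\bar\varphi)$ at least once in \emph{every} window of length $W:=\lceil\kappa/\epsilon^d\rceil$: the first condition guarantees an initial visit by round $W$, and the second caps the gap between consecutive visits of the same cell by $W$, so the inter-visit gaps are all at most $W$. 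This is the continuous analogue of the exact-repetition property used in Lemma~\ref{lma:sumconfwdiscdiscrete}.

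Next I would reuse the interleaved grouping of Lemma~\ref{lma:sumconfwdiscdiscrete}: with $M=\lfloor T/W\rfloor$, partition the rounds $\{1,\dots,MW\}$ into $W$ groups $\mathcal{G}_l=\{l,l+W,\dots\}$, so that consecutive rounds of a group are exactly $W$ apart. For the $m$-th round $t$ of group $l$, the considered feature $\dot{x}_{\dot{a}_t,t}$ lies in some cell $\mathcal{C}_\epsilon(\bar\varphi)$, and by the window property there is a \emph{true} feature $x_{a_{s'},s'}$ in the same cell with $t\le s'<t+W$, hence $\|\dot{x}_{\dot{a}_t,t}-x_{a_{s'},s'}\|_2\le 2\epsilon$. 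Because distinct rounds of a group sit in disjoint windows, these proxy indices $s'_1,\dots,s'_M$ are distinct, and for the first $m-1$ rounds of the group they are all strictly smaller than $t$. Building $\mathbf{W}_{l,m}=\lambda\mathbf{I}+\sum_{j\le m}x_{a_{s'_j},s'_j}x_{a_{s'_j},s'_j}^\top$ from these true proxy features therefore yields the submatrix relation $\mathbf{V}_{t-1}\succeq\mathbf{W}_{l,m-1}$, i.e.\ $\mathbf{V}_{t-1}^{-1}\preceq\mathbf{W}_{l,m-1}^{-1}$, since $\{s'_1,\dots,s'_{m-1}\}\subseteq\{1,\dots,t-1\}$.

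I would then chain two bounds per grouped round: the submatrix relation gives $\dot{x}_{\dot{a}_t,t}^\top\mathbf{V}_{t-1}^{-1}\dot{x}_{\dot{a}_t,t}\le \dot{x}_{\dot{a}_t,t}^\top\mathbf{W}_{l,m-1}^{-1}\dot{x}_{\dot{a}_t,t}$, and the split together with $\mathbf{W}_{l,m-1}^{-1}\preceq\frac1\lambda\mathbf{I}$ gives $\dot{x}_{\dot{a}_t,t}^\top\mathbf{W}_{l,m-1}^{-1}\dot{x}_{\dot{a}_t,t}\le 2\,x_{a_{s'},s'}^\top\mathbf{W}_{l,m-1}^{-1}x_{a_{s'},s'}+\tfrac{8\epsilon^2}{\lambda}$. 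Summing the true-feature term over a group is now a genuine elliptical-potential sum in the order the proxies are inserted into $\mathbf{W}_l$, so Lemma~\ref{lma:sumvarbatch} bounds it by $2\gamma(\mathbf{W}_{l,M})\le 2d\log(1+M/(d\lambda))$. Adding the $\tfrac{8\epsilon^2}{\lambda}$ terms (there are $M$ per group, hence $WM\le T$ overall), summing over the $W$ groups, and bounding the at most $W$ leftover rounds by $s_t^2\le 1/\lambda$, I obtain
\[\sum_{t=1}^T s_t^2(\dot{x}_{\dot{a}_t,t})\le W\Bigl(4d\log\bigl(1+\tfrac{T}{d\lambda}\bigr)+\tfrac{1}{\lambda}\Bigr)+\tfrac{8\epsilon^2}{\lambda}T.\]

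Finally I would optimize the free radius $\epsilon$. Choosing $\epsilon=\kappa^{1/d}T^{-1/(2d)}$ makes $W=\lceil\kappa/\epsilon^d\rceil=\lceil\sqrt{T}\rceil$ and $\epsilon^2=\kappa^{2/d}T^{-1/d}$, turning the first term into $\sqrt{T}\bigl(4d\log(1+\tfrac{T}{d\lambda})+\tfrac1\lambda\bigr)$ and the second into $\tfrac{8\kappa^{2/d}}{\lambda}T^{1-1/d}$, exactly the claimed bound. I expect the main obstacle to be the index bookkeeping that certifies $\mathbf{V}_{t-1}\succeq\mathbf{W}_{l,m-1}$ with distinct proxies lying in $\{1,\dots,t-1\}$ — this is precisely where the window property and the interleaved grouping must be combined. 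Once that submatrix relation holds, the second-order split is what lets the argument avoid comparing $\mathbf{V}_{t-1}$ with a center-based Gram matrix $\bar{\mathbf{V}}_{t-1}$, a comparison that would fail because $\|\mathbf{V}_{t-1}-\bar{\mathbf{V}}_{t-1}\|$ grows linearly in $t$ and eventually dwarfs $\lambda$.
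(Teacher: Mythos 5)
Your proposal is correct and takes essentially the same route as the paper's own proof: the same interleaved grouping of rounds into $\left\lceil \kappa/\epsilon^d\right\rceil$ groups, the same within-window proxy mapping of each considered feature to a true feature in the same cell (the paper's $\zeta(\cdot)$) yielding $\mathbf{V}_{t-1}\succeq \mathbf{W}_{l,m-1}$, the same factor-two split to absorb the $2\epsilon$ discretization error, the same per-group elliptical-potential bound, and the same balancing choice $\left\lceil \kappa/\epsilon^d\right\rceil=\sqrt{T}$. The only deviations are cosmetic — you apply the submatrix relation before the norm split rather than after, and you make explicit the distinctness of the proxy indices, which the paper leaves implicit.
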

\begin{proof}

	Based on the $\epsilon$- covering $\Phi_{\epsilon}\in \Phi $, we divide the round stamp sequence $\left\lbrace1, 2, \cdots, M\left \lceil  \kappa/\epsilon^d\right \rceil \right\rbrace $ uniformly into $\left \lceil  \kappa/\epsilon^d\right \rceil$ groups, each with $M= \left \lfloor  T/ \left \lceil  \kappa/\epsilon^d\right \rceil\right \rfloor$ elements. The $l$th group $\left( 1\leq l\leq \left \lceil  \kappa/\epsilon^d\right \rceil\right) $ is $\mathcal{G}_l=\left\lbrace l, l+\left \lceil  \kappa/\epsilon^d\right \rceil,\cdots,l+\left(M-1 \right)\left \lceil  \kappa/\epsilon^d\right \rceil \right\rbrace $, and $\mathcal{G}_{l,m}=\left\lbrace l, l+\left \lceil  \kappa/\epsilon^d\right \rceil,\cdots,l+\left(m-1 \right)\left \lceil  \kappa/\epsilon^d \right \rceil\right\rbrace$ is a subset of  $\mathcal{G}_l$ with the first $m$ entries. 
	
	Now we consider the $l$th group. The two conditions in this lemma imply that
	for a certain $t\in \mathcal{G}_{l}$ and its index $m_t=\left \lfloor (t-1)/ \left \lceil  \kappa/\epsilon^d \right \rceil\right \rfloor$, $\forall s \in \mathcal{G}_{l,m_t-1}$, if $\dot{x}_{\dot{a}_s,s}\in \mathcal{C}_\epsilon\left( \bar{\varphi}\right) $ for some $\bar{\varphi}\in\Phi_{\epsilon} $, there exists $s'(s\leq s'< s+\left \lceil  \kappa/\epsilon^d\right \rceil)$ such that the true context-arm $x_{a_{s'},s'}\in \mathcal{C}_\epsilon\left( \bar{\varphi}\right) $. Denote $s'=\zeta(s)$ mapping $s$ to its corresponding $s'$, and we have $\zeta(s)\leq t-1$.
	Also, denote $\mathcal{G}'_{l,m_t-1}=\left\lbrace \zeta(s)\mid s\in \mathcal{G}_{l,m_t-1}\right\rbrace $, and we have $\mathcal{G}'_{l,m_t-1}\subseteq \mathcal{T}_{t-1}=\left\lbrace 1,2,\cdots, t-1\right\rbrace $.
	Let $\mathbf{W}_{l,m}=\lambda\mathbf{I}_d+\sum_{s\in \mathcal{G}_{l,m}}x_{a_{\zeta(s)},\zeta(s)}x_{a_{\zeta(s)},\zeta(s)}^{\top}=\lambda\mathbf{I}_d+\sum_{s'\in \mathcal{G}'_{l,m}}x_{a_{s'},s'}x_{a_{s'},s'}^{\top}$, for $1\leq l\leq \left \lceil  \kappa/\epsilon^d\right \rceil$. By the above analysis, we have  $\mathbf{V}_{t-1}\succeq\mathbf{W}_{l,m_t-1}$, and so $\mathbf{V}^{-1}_{t-1}\preceq \mathbf{W}^{-1}_{l,m_t-1}$ considering $\mathbf{V}_{t-1}$ and $\mathbf{W}_{l,m_t-1}$ are both positive definite matrices.
	
	By Lemma 2, for the sequence $\{x^{\top}_{a_{\zeta(t)},\zeta(t)}, t\in\mathcal{G}_l\}$,
	we have $\sum_{t\in \mathcal{G}_l}x^{\top}_{a_{\zeta(t)},\zeta(t)}\mathbf{W}^{-1}_{l,m_t-1}x_{a_{\zeta(t)},\zeta(t)}\leq \gamma(\mathbf{W}_{l,M})$.
	Denote $e_{t}=\dot{x}_{\dot{a}_t,t}-x_{a_{\zeta(t)},\zeta(t)}$.  Since $x_{a_{\zeta(t)},\zeta(t)}$ and $\dot{x}_{\dot{a}_{t},t}$ belong to the same cell in $\epsilon-$ covering $\Phi_{\epsilon}$, we have $\|e_{t}\|\leq 2\epsilon$ by triangle inequality.
	Therefore for group $\mathcal{G}_{l}$, we have
	\begin{equation}
		\begin{split}
			\sum_{t\in \mathcal{G}_{l}}&s^2_{t}\left(\dot{x}_{\dot{a}_t,t}\right) =\sum_{t\in \mathcal{G}_{l}}\left( \dot{x}_{\dot{a}_t,t}\right)^\top \mathbf{V}^{-1}_{t-1}\left( \dot{x}_{\dot{a}_t,t}\right)\\
			&=\sum_{t\in \mathcal{G}_{l}}\left( x_{a_{\zeta(t)},\zeta(t)}+e_{t}\right)^\top \mathbf{V}^{-1}_{t-1}\left( x_{a_{\zeta(t)},\zeta(t)}+e_{t}\right)\\
			&\leq 2\sum_{t\in \mathcal{G}_{l}}x^{\top}_{a_{\zeta(t)},\zeta(t)}\mathbf{V}^{-1}_{t-1}x_{a_{\zeta(t)},\zeta(t)}+2\sum_{t\in \mathcal{G}_{l}}e^{\top}_{t}\mathbf{V}^{-1}_{t-1}e_{t}\\
			&\leq 2\sum_{t\in \mathcal{G}_{l}}x^{\top}_{a_{\zeta(t)},\zeta(t)}\mathbf{W}^{-1}_{l,m_t-1}x_{a_{\zeta(t)},\zeta(t)}+2\frac{M}{\lambda}\left( 2\epsilon\right) ^2\\
			&\leq 2\gamma(\mathbf{W}_{l,M}) +8\frac{M}{\lambda}\epsilon^2,	
		\end{split}
	\end{equation}
	where the first inequality comes from Cauchy-Schwartz inequality and the second inequality holds because  $\mathbf{V}^{-1}_{t-1}\preceq \mathbf{W}^{-1}_{l,m_t-1}$ and $\mathrm{eig}(\mathbf{V}^{-1}_{t-1})\leq \frac{1}{\lambda}$, and the last inequality holds by Lemma 2.
	
	Now we can bound the sum of the confidence widths for the whole sequence as
	\begin{equation}
		\begin{split}
			\sum_{t=1}^T &s^2_{t}\left(\dot{x}_{\dot{a}_t,t}\right) = \sum_{l=1}^{\left \lceil  \kappa/\epsilon^d\right \rceil} \sum_{t\in \mathcal{G}_l}s^2_{t}\left(\dot{x}_{\dot{a}_t,t} \right)+ \!\!\!\sum_{t=M\left \lceil  \kappa/\epsilon^d\right \rceil+1}^{T}\!\!\!\! s^2_{t}\left(\dot{x}_{\dot{a}_t,t} \right)\\
			&\leq \left \lceil  \kappa/\epsilon^d\right \rceil\left(2\gamma(\mathbf{W}_{l,M}) +\frac{8M}{\lambda}\epsilon^2+\frac{1}{\lambda}	\right)\\
			&\leq \left \lceil  \kappa/\epsilon^d\right \rceil\left( 4d\log\left(1+\frac{T}{d\lambda} \right)+\frac{1}{\lambda}	\right)+\frac{8T}{\lambda}\epsilon^2,
		\end{split}
	\end{equation}
	where the last inequality holds because $\gamma(\mathbf{W}_{l,M})\leq 2d\log\left(1+\frac{M}{d\lambda} \right) $ by Lemma 11 in \cite{yadkori2011} and $M= \left \lfloor  T/ \left \lceil  \kappa/\epsilon^d\right \rceil\right \rfloor$.
	Let $\left \lceil  \kappa/\epsilon^d\right \rceil=T^{1/2} $, i.e. $\epsilon=\left( \kappa^2/T\right)^{1/2d} $, then we have
	\begin{equation}
		\begin{split}
			\sum_{t=1}^T s^2_{t}\left(\dot{x}_{\dot{a}_t,t}\right)
			\!\!\leq\!\! \sqrt{T}\left(\! 4d\log\!\left(1+\frac{T}{d\lambda} \right)\!+\!\frac{1}{\lambda}	\right) \!\!+\!\!\frac{8\kappa^{2/d}}{\lambda}T^{1-1/d},
		\end{split}
	\end{equation}
	thus completing the proof.
\end{proof}

Now we can prove proposition \ref{prop:sumconfwd} by generalizing Lemma \ref{lma:sumconfwdisccont} into kernel case assuming the mapping function $\phi:\mathcal{X}\times\mathcal{A}\rightarrow \mathcal{H}$ is Lipschitz continuous with constant $L_{\phi}$, i.e. $\forall x, y\in \mathcal{X}\times\mathcal{A}$, $\|\phi(x)-\phi(y)\|\leq L_{\phi}\|x-y\|$.

\textbf{Proof of Proposition \ref{prop:sumconfwd}}

\textbf{Proposition \ref{prop:sumconfwd}}\textit{
	Let $\mathcal{X}_T=\left\lbrace x_{a_1,1},\cdots,  x_{a_T,T}\right\rbrace$ be the sequence of true contexts and selected arms by bandit algorithms and $\dot{\mathcal{X}}_T=\left\lbrace  \dot{x}_{\dot{a}_1,1},\cdots,  \dot{x}_{\dot{a}_T,T} \right\rbrace $ be the considered sequence of contexts and actions. Suppose that both $x_{a_t,t}$  and $\dot{x}_{\dot{a}_t,t}$belong to $\Phi$. Besides, with an $\epsilon-$ covering $\Phi_{\epsilon}\subseteq \Phi$, $\epsilon> 0$,  there exists $ \kappa \geq 0$ such that two conditions are satisfied. First, $\forall \bar{\varphi}\in \Phi_{\epsilon}$, $\exists t\leq \left \lceil  \kappa/\epsilon^d\right \rceil$ such that $x_{a_t,t}\in \mathcal{C}_{\epsilon}\left(\bar{\varphi} \right) $. Second, if at round $t$, $x_{a_t,t}\in \mathcal{C}_{\epsilon}\left(\bar{\varphi} \right) $ for some $\bar{\varphi}\in \Phi_{\epsilon}$, then $\exists  t\leq t'< t+\left \lceil  \kappa/\epsilon^d\right \rceil$ such that $x_{a_t',t'}\in \mathcal{C}_{\epsilon}\left(\bar{\varphi} \right) $. If the mapping function $\phi$ is Lipschitz continuous with constant $L_{\phi}$, the sum of squared confidence widths is bounded as
	\begin{equation}
		\begin{split}
			\sum_{t=1}^T \!s^2_{t}\left(\dot{x}_{\dot{a}_t,t}\right)
			\!\!\leq \!\!\sqrt{T}\!\left(\! 4\tilde{d}\log\left(\!1\!+\!\frac{T}{\tilde{d}\lambda} \!\right)\!+\!\frac{1}{\lambda}\!	\right) \!\!+\!\!\frac{\!8L^2_{\phi}\kappa^{2/d}\!}{\lambda}T^{1-1/d},
		\end{split}
	\end{equation}
	where $d$ is the dimension of $x_{a_t,t}$, $\tilde{d}$ is the effective dimension defined in the proof, $s^2_{t}\left(\dot{x}_{\dot{a}_t,t}\right)\!=\!\phi(\dot{x}_{\dot{a}_t,t})^{\top}\mathbf{V}^{-1}_{t-1}\phi(\dot{x}_{\dot{a}_t,t})$ and $\mathbf{V}_t\!=\!\lambda\mathbf{I}+\sum_{s=1}^t\phi(x_{a_s,s})\phi(x_{a_s,s})^{\top}$.}
\begin{proof}
	With the same method in Lemma \ref{lma:sumconfwdisccont},  we construct $\left \lceil  \kappa/\epsilon^d\right \rceil$ groups, each with $M= \left \lfloor  T/ \left \lceil  \kappa/\epsilon^d\right \rceil\right \rfloor$ elements.  Define $\mathcal{G}_l, \mathcal{G}_{l,m}, \mathcal{G}'_{l,m}$,$\zeta(t), m_t, e_t$ same as Lemma \ref{lma:sumconfwdisccont} and let $\mathbf{W}_{l,m}=\lambda\mathbf{I}+\sum_{s\in \mathcal{G}_{l,m}}\phi(x_{a_{\zeta(s)},\zeta(s)})\phi(x_{a_{\zeta(s)},\zeta(s)})^{\top}=\lambda\mathbf{I}+\sum_{s'\in \mathcal{G}'_{l,m}}\phi(x_{a_{s'},s'})\phi(x_{a_{s'},s'})^{\top}$, for $1\leq l\leq \left \lceil  \kappa/\epsilon^d\right \rceil$. By Lemma \ref{lma:sumconfwdisccont}, we have  $\mathbf{V}_{t-1}\succeq\mathbf{W}_{l,m_t-1}$, and so $\mathbf{V}^{-1}_{t-1}\preceq \mathbf{W}^{-1}_{l,m_t-1}$. 
	Therefore for group $\mathcal{G}_{l}$, we have
	\begin{equation}
		\begin{split}
			&\sum_{t\in \mathcal{G}_{l}}s^2_{t}\left(\dot{x}_{\dot{a}_t,t}\right)=\sum_{t\in \mathcal{G}_{l}}\phi\left( \dot{x}_{\dot{a}_t,t}\right)^\top \mathbf{V}^{-1}_{t-1}\phi\left( \dot{x}_{\dot{a}_t,t}\right)\\
			&=\sum_{t\in \mathcal{G}_{l}}\|\phi\left( x_{a_{\zeta(t)},\zeta{t}}\right)+\phi\left( \dot{x}_{\dot{a}_t,t}\right) -\phi\left( x_{a_{\zeta(t)},\zeta{t}}\right)\|^2_{\mathbf{V}^{-1}_{t-1}}\\
			&\leq \!2\!\sum_{t\in \mathcal{G}_{l}}\!\!\|\phi\!\left(\! x_{a_{\zeta(t)},\zeta{t}}\right)\!\!\|^2_{\mathbf{V}^{-1}_{t-1}}\!\!\!\!+\!\!2\sum_{t\in \mathcal{G}_{l}}\!\!\|\phi\left(\! \dot{x}_{\dot{a}_t,t}\!\right)\! \!-\!\phi\left( x_{a_{\zeta(t)},\zeta{t}}\right)\!\!\|^2_{\mathbf{V}^{-1}_{t-1}}\\
			&\leq 2\sum_{t\in \mathcal{G}_{l}}\|\phi(x_{a_{\zeta(t)},\zeta(t)})\|^2_{\mathbf{W}^{-1}_{l,m_t-1}}+2\frac{M}{\lambda}\left( L_{\phi}\|e_t\|_2\right) ^2\\
			&\leq 2\gamma(\mathbf{W}_{l,M}) +8\frac{ML^2_{\phi}}{\lambda}\epsilon^2,	
		\end{split}
	\end{equation}
	where the first inequality comes from Cauchy-Schwartz inequality and the second inequality holds because  $\mathbf{V}^{-1}_{t-1}\preceq \mathbf{W}^{-1}_{l,m_t-1}$, $\mathrm{eig}(\mathbf{V}^{-1}_{t-1})\leq \frac{1}{\lambda}$ and Lipschitz continuity of $\phi$ such that $\|\phi\left( \dot{x}_{\dot{a}_t,t}\right) -\phi\left( x_{a_{\zeta(t)},\zeta{t}}\right)\|_2\leq L_{\phi}\|e_t\|_2$, and the last inequality holds by Lemma 2.
	
	For kernel case, with $\mathbf{K}_{l,M}\in\mathcal{R}^{M\times M}$ containing $k(x_{a_{\zeta(t)},\zeta(t)},x_{a_{\zeta(t)},\zeta(t)})$ for $t\in \mathcal{G}_l$, we have $\gamma(\mathbf{W}_{l,M})=\log\frac{\det(\mathbf{W}_{l,M})}{\det(\lambda \mathbf{I}_{\mathcal{H}})}=\log\frac{\det(\mathbf{I}_{M}+\mathbf{K}_{l,M})}{\det(\lambda \mathbf{I}_M)}\leq 2\tilde{d}_l\log(1+\frac{M}{\tilde{d}_l})$, where $\tilde{d}_l$ is the rank of $\mathbf{K}_{l,M}$. Define the effective dimension as $\tilde{d}=\arg\max_{\tilde{d}_l=\tilde{d}_1, \cdots,\tilde{d}_{\left \lceil  \kappa/\epsilon^d\right \rceil}}2\tilde{d}_l\log(1+\frac{M}{\tilde{d}_l})$. Then we can bound the sum of the confidence widths for the whole sequence as
	\begin{equation}
		\begin{split}
			\sum_{t=1}^T &s^2_{t}\left(\dot{x}_{\dot{a}_t,t}\right)= \sum_{l=1}^{\left \lceil  \kappa/\epsilon^d\right \rceil} \sum_{t\in \mathcal{G}_l}s^2_{t}\left(\dot{x}_{\dot{a}_t,t} \right)+ \!\!\!\!\sum_{t=M\left \lceil  \kappa/\epsilon^d\right \rceil+1}^{T}\!\!\!\! s^2_{t}\left(\dot{x}_{\dot{a}_t,t} \right)\\
			&\leq \left \lceil  \kappa/\epsilon^d\right \rceil\left(2\gamma(\mathbf{W}_{l,M}) +\frac{8ML^2_{\phi}}{\lambda}\epsilon^2+\frac{1}{\lambda}	\right)\\
			&\leq \left \lceil  \kappa/\epsilon^d\right \rceil\left( 4\tilde{d}\log(1+\frac{M}{\tilde{d}})+\frac{1}{\lambda}	\right)+\frac{8TL^2_{\phi}}{\lambda}\epsilon^2,
		\end{split}
	\end{equation}
	Let $\left \lceil  \kappa/\epsilon^d\right \rceil=T^{1/2} $, i.e. $\epsilon=\left( \kappa^2/T\right)^{1/2d} $, then we have
	\begin{equation}
		\begin{split}
			\sum_{t=1}^T s^2_{t}\left(\dot{x}_{\dot{a}_t,t}\right)
			\!\!\leq\!\! \sqrt{T}\!\!\left( \!\!4\tilde{d}\log\!\left(\!\!1\!+\!\frac{T}{\tilde{d}\lambda} \right)\!+\!\frac{1}{\lambda}	\right) \!\!+\!\!\frac{8L^2_{\phi}\kappa^{2/d}}{\lambda}T^{1-1/d},
		\end{split}
	\end{equation}
	thus completing the proof.
\end{proof}

\subsection{Proof of Theorem \ref{the:regretminwd}}
\textbf{Theorem \ref{the:regretminwd}.}\textit{
	If \ouralgthree is used to select arms with imperfect context and as time goes on, and the conditions in Proposition \ref{prop:sumconfwd} are satisfied, then for any true context $x_t\in \mathcal{B}_{\Delta}(\hat{x}_t)$ at round $t, t=1,\cdots, T$,  with a probability of $1-\delta,\delta\in (0,1)$, we have the following bound on the cumulative true regret:
	\begin{equation*}
		\begin{split}
			R_T \leq &\sum_{t}^{T}MR_t+2h_TT^{\frac{3}{4}}\sqrt{\left( 4\tilde{d}\log\left(1+\frac{T}{\tilde{d}\lambda} \right)	+\frac{1}{\lambda} 	\right)} +\\
			&4\sqrt{\frac{2}{\lambda}}L_{\phi}\kappa^{\frac{1}{d}}h_TT^{1-\frac{1}{2d}}+2h_T\!\!\sqrt{2T\bar{d}\log (1\!+\!\frac{T}{\bar{d}\lambda} )},
		\end{split}
	\end{equation*}
	where $MR_t$ is the optimal worst-case regret for round $t$ in Eqn.~\eqref{eqn:optworstreg}, $d$ is the dimension of $x_{a_t,t}$, $\tilde{d}$ is the effective dimension defined in the proof of Proposition \ref{prop:sumconfwd}, $\bar{d}$ is the rank of $\mathbf{K}_t$ and $h_T$ is given in Lemma \ref{lma:KernelConcent}.}

\begin{proof}
	Since with a probability $1-\delta$, $\delta\in(0,1)$, $r\left(x_t, a^{\textrm{II}}_t \right)\leq \overline{D}_{a_t^{\textrm{II}},t}+2h_ts_t(x_t,a_t^{\textrm{II}}) $, we can bound the cumulative regret of \ouralgthree as
	\begin{equation}
		\begin{split}
			R_T&= \sum_{t=1}^{T}r\left(x_t,a^{\textrm{II}}_t \right)\leq \sum_{t=1}^T\left[ \overline{D}_{a_t^{\textrm{II}},t}+2h_ts_t(x_t,a_t^{\textrm{II}})\right] \\
			&\leq \sum_{t=1}^T\left[ MR_t+2h_ts_t\left(\dot{x}_t,  A^{\dagger}_t\left(\dot{x}_t \right) \right)+2h_ts_t(x_t,a_t^{\textrm{II}})\right],
		\end{split}
	\end{equation}
	where the second inequality comes from Lemma \ref{lma:boudnsubopt}.
	
	Since the sequence $\{x_t,a^{\mathrm{II}}_t \}$ meets the conditions in Proposition 5, by replacing $\dot{x}_{\dot{a}_t,t}$ and $x_{a_{t},t}$ in Proposition \ref{prop:sumconfwd} with $[\dot{x}_t,A^{\dagger}_t\left(\dot{x}_t \right) ]$ and $[x_t,a^{\mathrm{II}}_t]$, we have
	\begin{equation}
		\begin{split}
			\sum_{t=1}^T&s_t\left(\dot{x}_t,  A^{\dagger}_t\left(\dot{x}_t \right) \right)\leq \sqrt{T\sum_{t=1}^Ts^2_t\left(\dot{x}_t,  A^{\dagger}_t\left(\dot{x}_t \right) \right)}\\
			&\leq \sqrt{T^{3/2}\left( \!\!4\tilde{d}\log\!\left(\!\!1\!+\!\frac{T}{\tilde{d}\lambda} \right)\!+\!\frac{1}{\lambda}	\right) +\frac{8L^2_{\phi}\kappa^{2/d}}{\lambda}T^{2-1/d}}\\
			&\leq T^{\frac{3}{4}}\sqrt{\!\!4\tilde{d}\log\!\left(\!\!1\!+\!\frac{T}{\tilde{d}\lambda} \right)\!+\!\frac{1}{\lambda}} +2\sqrt{\frac{2}{\lambda}}L_{\phi}\kappa^{\frac{1}{d}}T^{1-\frac{1}{2d}}.
		\end{split}
	\end{equation}
	Combining with Lemma \ref{lma:sumvarbatch}, we can get the bound in Theorem \ref{the:regretminwd}.
\end{proof}

\subsection{Proof of Corollary \ref{cor:rewardminwd}}
\textbf{Corollary \ref{cor:rewardminwd}}\textit{
	If \ouralgthree is used to select arms with imperfect context and as time goes on, and the true sequence of context and arm obeys the conditions in Proposition \ref{prop:sumconfwd}, then for any true contexts $x_t\in \mathcal{B}_{\Delta}(\hat{x}_t)$ at round $t, t=1,\cdots, T$, with a probability of $1-\delta,\delta\in (0,1)$, 	we have the following lower bound of the cumulative reward
	\begin{equation*}
		\begin{split}
			F_T &\geq \!\sum_{t=1}^{T}\left[ MF_t\!-\!MR_t\right]\!-\!2h_TT^{\frac{3}{4}}\!\sqrt{\!\left( 4\tilde{d}\log\left(1+\frac{T}{\tilde{d}\lambda} \right)\!\!	+\!\!\frac{1}{\lambda} 	\right)}\\
			&-4\sqrt{\frac{2}{\lambda}}L_{\phi}\kappa^{\frac{1}{d}}h_TT^{1-\frac{1}{2d}}-2h_T\!\!\sqrt{2T\bar{d}\log (1\!+\!\frac{T}{\bar{d}\lambda} )},
		\end{split}
	\end{equation*}
	where $MR_t$ is the optimal worst-case regret for round $t$ in Eqn.~\eqref{eqn:optworstreg}, $d$ is the dimension of $x_{a_t,t}$, $\tilde{d}$ is the effective dimension defined in the proof of Proposition \ref{prop:sumconfwd}, $\bar{d}$ is the rank of $\mathbf{K}_t$, and $h_T$ is given in Lemma \ref{lma:KernelConcent}.}
\begin{proof}
	By Lemma \ref{lma:KernelConcent},
	with a probability $1-\delta$, $\delta\in[0,1]$, we can bound the reward as below
	\begin{equation}
		\begin{split}
			&f(x_{t},a^{\textrm{II}}_t)
			\geq U_t\left(x_{t},a^{\textrm{II}}_t \right)-2h_ts_t\left(x_t,a^{\textrm{II}}_t \right) \\
			=&U_t\left(x_{t},a^{\textrm{II}}_t \right)\!-\!U_t\left(x_{t},A^{\dagger}_t(x_t)  \right)\!+\\
			&\qquad U_t\left(x_{t},A^{\dagger}_t(x_t)  \right)\!\!-\!\!2h_ts_t\left(x_t,a^{\textrm{II}}_t \right)\\
			=&-D_{a^{\textrm{II}}_t,t}\left(x_t \right) +U_t\left(x_{t},A^{\dagger}_t(x_t)  \right)-2h_ts_t\left(x_t,a^{\textrm{II}}_t \right),
		\end{split}
	\end{equation}
	Recall that the upper bound of the worst-case degradation is defined as $\overline{D}_{a,t}\!\!=\!\!\max_{x \in \mathcal{B}_{\Delta}(\hat{x}_t)}\left\lbrace  U_t\left(x, A^{\dagger}_t(x)  \right) - U_t\left(x,a\right) \right\rbrace$, so the reward can be further bounded as
	\begin{equation}\label{eqn:boundrewardalgorithm3}
		\begin{split}
			&f(x_{t},a^{\textrm{II}}_t)\\
			\geq&
			\left( -\overline{D}_{a^{\textrm{II}}_t,t}\right)+U_t\left(x_{t},A^{\dagger}_t(x_t)  \right)-2h_ts_t\left(x_t,a^{\textrm{II}}_t \right)\\
			\geq&
			\left( -\overline{D}_{a^{\textrm{II}}_t,t}\right)+\min_{x\in \mathcal{B}_{\Delta}(\hat{x}_t)}U_t\left(x,A^{\dagger}_t(x)  \right)-2h_ts_t\left(x_t,a^{\textrm{II}}_t \right)\\
			\geq&
			\left( -\overline{D}_{a^{\textrm{II}}_t,t}\right)+\max_{a\in \mathcal{A}}\min_{x\in \mathcal{B}_{\Delta}(\hat{x}_t)}U_t\left(x,a \right)-2h_ts_t\left(x_t,a^{\textrm{II}}_t \right),
		\end{split}
	\end{equation}
	where the last inequality is because $\min_{x\in \mathcal{B}_{\Delta}(\hat{x}_t)}U_t\left(x,A^{\dagger}_t(x)  \right)\!\!=\!\!\min_{\mathbf{x}\in \mathcal{B}_{\Delta}(\hat{x}_t)} \max_{a\in \mathcal{A}}\!\!U_t\left(x,a  \right)\geq \max_{a\in \mathcal{A}}\min_{x\in \mathcal{B}_{\Delta}(\hat{x}_t)}U_t\left(x,a \right)$ by max-min inequality.
	
	Note that $\max_{a\in \mathcal{A}}\min_{x\in \mathcal{B}_{\Delta}(\hat{x}_t)}U_t\left(x,a \right)$ is the arm selection policy of \ouralgtwo whose solutions are $a^{\textrm{I}}_t$ and $x^{\textrm{I}}_t$. This observation is important since it bridges  \ouralgthree and \ouralgtwo. Also, recall that in Eqn.~\eqref{eqn:maxminreward}, $ \bar{a}_t=\arg\max_{a \in \mathcal{A}}\min_{x \in \mathcal{B}_{\Delta}(\hat{x}_t)}f\left(x,a \right)$ is the optimal arm for worst-case reward and $MF_t=\min_{x \in \mathcal{B}_{\Delta}(\hat{x}_t)}f\left(x,\bar{a}_t \right)$ is the optimal worst-case reward.
	Then following Eqn.~\eqref{eqn:boundrewardalgorithm3}, we have
	\begin{equation}
		\begin{split}
			&f(x_{t},a^{\textrm{II}}_t)\\
			\geq &\left( -\overline{D}_{a^{\textrm{II}}_t,t}\right)+\max_{a\in \mathcal{A}}\min_{x\in \mathcal{B}_{\Delta}(\hat{x}_t)}U_t\left(x,a \right)-2h_ts_t\left(x_t,a^{\textrm{II}}_t \right)\\
			=& \left( -\overline{D}_{a^{\textrm{II}}_t,t}\right)+ U_t\left(x^{\textrm{I}}_t,a^{\textrm{I}}_t \right)-2h_ts_t\left(x_t,a^{\textrm{II}}_t \right) \\
			\geq &\left( -\overline{D}_{a^{\textrm{II}}_t,t}\right)+ U_t\left(\bar{x}^{\textrm{I}}_t,\bar{a}_t \right)-2h_ts_t\left(x_t,a^{\textrm{II}}_t \right) \\
			\geq &\left(\! -\overline{D}_{a^{\textrm{II}}_t,t}\!\right)\!+\! U_t\left(\bar{x}^{\textrm{I}}_t,\bar{a}_t \right)\!-\!2h_ts_t\left(x_t,a^{\textrm{II}}_t \right)\!-\!f\left(\bar{x}^{\textrm{I}}_t,\bar{a}_t \right)\!+\!MF_t\\
			\geq &  \left( -\overline{D}_{a^{\textrm{II}}_t,t}\right)+MF_t-2h_ts_t\left(x_t,a^{\textrm{II}}_t \right),
		\end{split}
	\end{equation}
	where $\bar{x}^{\textrm{I}}_{t}=\arg\min_{x \in \mathcal{B}_{\Delta}(\hat{x}_t)}U_t\left(x,\bar{a}_t\right)$, the second inequality holds by the arm selection strategy
	of \ouralgtwo such that $U_t\left(x^{\textrm{I}}_t,a^{\textrm{I}}_t \right)\geq U_t\left(\bar{x}^{\textrm{I}}_t,\bar{a}_t \right)$, the third inequality comes from the definition of $MF_t$ in Eqn.~\eqref{eqn:worstreward} which guarantees  $MF_t=\min_{x\in\mathcal{B}_{\Delta}(\hat{x}_t)}f(x,\bar{a}_t)\leq f\left(\bar{x}^{\textrm{I}}_t,\bar{a}_t \right)$ and the forth inequality comes from Lemma \ref{lma:KernelConcent} 
	such that $U_t\left(\bar{x}^{\textrm{I}}_t,\bar{a}_t \right)\geq f\left(\bar{x}^{\textrm{I}}_t,\bar{a}_t \right)$.
	
	Finally, since $\overline{D}_{a^{\textrm{II}}_t,t}+2h_ts_t\left(x_t,a^{\textrm{II}}_t \right)$ is the upper bound of instantaneous regret of \ouralgthree, by directly using Theorem \ref{the:regretminwd}
	we can prove the lower bound reward of \ouralgthree.	
\end{proof}
\end{document}